\newcommand*{\addFileDependency}[1]{
  \typeout{(#1)}
  \@addtofilelist{#1}
  \IfFileExists{#1}{}{\typeout{No file #1.}}
}
\newcommand*{\myexternaldocument}[1]{%
    \externaldocument{#1}%
    \addFileDependency{#1.tex}%
    \addFileDependency{#1.aux}%
}
\newcommand{\mathup}[1]{\textrm{\upshape{#1}}}
\newcommand*\diff{\mathop{}\!\mathrm{d}}
\theoremstyle{plain}
\newtheorem{theorem}{Theorem}[section]
\newtheorem{lemma}[theorem]{Lemma}
\theoremstyle{definition}
\newtheorem{definition}[theorem]{Definition}
\theoremstyle{remark}
\definecolor{lightgray}{HTML}{F5F5F5}
\DeclareRobustCommand\circled[1]{\tikz[baseline=(char.base)]{\node[shape=circle,draw=black,minimum size=0.35cm,inner sep=0pt,fill=lightgray] (char) {\fontfamily{phv}\selectfont \scriptsize \textbf{#1}};}}
\newcommand{\framework}{\textsc{\mbox{PrivATE}}\xspace}
\begin{document}

\twocolumn[
\aistatstitle{\framework: Differentially Private Confidence Intervals for Average Treatment Effects}



\aistatsauthor{Maresa Schröder \And Justin Hartenstein \And Stefan Feuerriegel}

\aistatsaddress{LMU Munich\\Munich Center for Machine Learning\\\texttt{maresa.schroeder@lmu.de} \And Stanford University\\\texttt{justinha@stanford.edu} \And LMU Munich\\Munich Center for Machine Learning\\\texttt{feuerriegel@lmu.de} }

]



\begin{abstract}
The average treatment effect (ATE) is widely used to evaluate the effectiveness of drugs and other medical interventions. In safety-critical applications like medicine, reliable inferences about the ATE typically require valid uncertainty quantification, such as through confidence intervals (CIs). However, estimating treatment effects in these settings often involves sensitive data that must be kept private. In this work, we present \framework, a novel machine learning framework for computing CIs for the ATE under differential privacy. Specifically, we focus on deriving \emph{valid privacy-preserving CIs for the ATE} from observational data. Our \framework framework consists of three steps: (i)~estimating the differentially private ATE through output perturbation; (ii)~estimating the differentially private variance in a doubly robust manner; and (iii)~constructing the CIs while accounting for the uncertainty from both the estimation and privatization steps. Our \framework framework is model agnostic, doubly robust, and ensures valid CIs. We demonstrate the effectiveness of our framework using synthetic and real-world medical datasets. To the best of our knowledge, we are the first to derive a general, doubly robust framework for valid CIs of the ATE under ($\varepsilon,\delta$)-differential privacy.
\end{abstract}

\section{Introduction}
\label{sec:introduction}

Estimating the \emph{average treatment effect (ATE)} from observational data is highly relevant for evaluating the effectiveness of drugs and other medical interventions \citep[e.g.,][]{Ballmann.2015, Buell.2024, Feuerriegel.2024}. To ensure the reliability of ATE estimates, one often needs to ``move beyond the mean'' by accounting for the estimation uncertainty \citep{Heckman.1997, Kneib.2023}. This is typically achieved through \emph{confidence intervals (CIs)}, which provide a range within which the true effect is likely to lie \citep{Neyman.1937}. Medical studies frequently report 95\% CIs to show the precision of the estimated treatment effects. For example, the clinical trial for the Moderna COVID-19 vaccine specified that the effectiveness must meet a predefined threshold based on the 95\% CI, and, ultimately, the trial reported a 95\% CI of 89.3\% to 96.8\% \citep{Baden.2021}.

Estimating ATEs in medicine typically involves sensitive patient data \citep{Brothers.2015}, which must be kept private. Several regulations have been introduced to enforce data privacy, including the US Health Insurance Portability and Accountability Act (HIPAA) and the EU General Data Protection Regulation (GDPR). One approach to analyse sensitive data is to use privacy mechanisms such as \emph{differential privacy (DP)}, which enables population-level inference while protecting individual information \citep{Dwork.2006}. Due to strong theoretical guarantees, especially ($\varepsilon,\delta$)-DP has received increasing attention \citep[e.g.,][]{Abadi.2016, Bassily.2014}.

In this work, our aim is to \emph{compute differentially private confidence intervals for the ATE}. However, this is \textit{non-trivial}: constructing CIs solely based on the private ATE estimate does \emph{not} suffice to ensure DP of the entire CI, as estimating the variance required for constructing CIs involves additional queries to the underlying data. Thus, it is necessary to develop privacy-preserving mechanisms not only for the ATE estimate itself but also for the variance estimate. In other words, \emph{both components must be computed in a differentially private manner} to ensure privacy.

Yet, this introduces an additional \emph{challenge}: \emph{privatizing the ATE and its variance introduces additional noise}. This added uncertainty needs to be accounted for when computing CIs to ensure \emph{validity}\footnote{A CI is valid if it achieves its nominal coverage rate asymptotically. For example, a 95\% confidence interval is valid if it covers the true parameter approximately 95\% of the time under repeated sampling.}. Failure to do so results in CIs that do not maintain their stated coverage probability. More formally, in estimation under DP, uncertainty arises from two sources: (i)~the sampling error inherent to the data and the uncertainty introduced by the machine learning (ML) model, and (ii)~the noise added to ensure privacy. The second source of uncertainty makes its quantification for DP estimates \emph{inherently different from standard uncertainty quantification}. Therefore, traditional methods to construct CIs fail to account for the latter, resulting in CIs that \underline{\emph{not}} valid and thus give rise to misleading conclusions.

So far, there are only a few works on treatment effect estimation under DP \citep[e.g.,][]{Guha.2024, Lee.2019, Ohnishi.2024}. Existing works mainly focus on estimating causal quantities as point estimates, whereas privacy-preserving CIs are missing. As discussed above, extending existing methods for ATE estimation under DP to construct differentially private CIs is \underline{not} readily feasible. Other studies focus on privacy-preserving CIs \citep[e.g.,][]{Barrientos.2019, Ferrando.2022}, yet outside of treatment effect estimation and, therefore, cannot be directly applied to causal quantities such as the ATE. \citet{Guha.2024} and \citet{Ohnishi.2024} present private CIs for the ATE, but in \emph{different} settings (e.g., by making restrictive assumptions on the data-generating process, by being limited to linear models, and/or by focusing on a different definition of DP). To the best of our knowledge, a method for computing valid CIs for the ATE under ($\varepsilon,\delta$)-DP is missing. 

\begin{figure}[h]
\vspace{-0.4cm}
    \centering
    \includegraphics[width=1\linewidth]{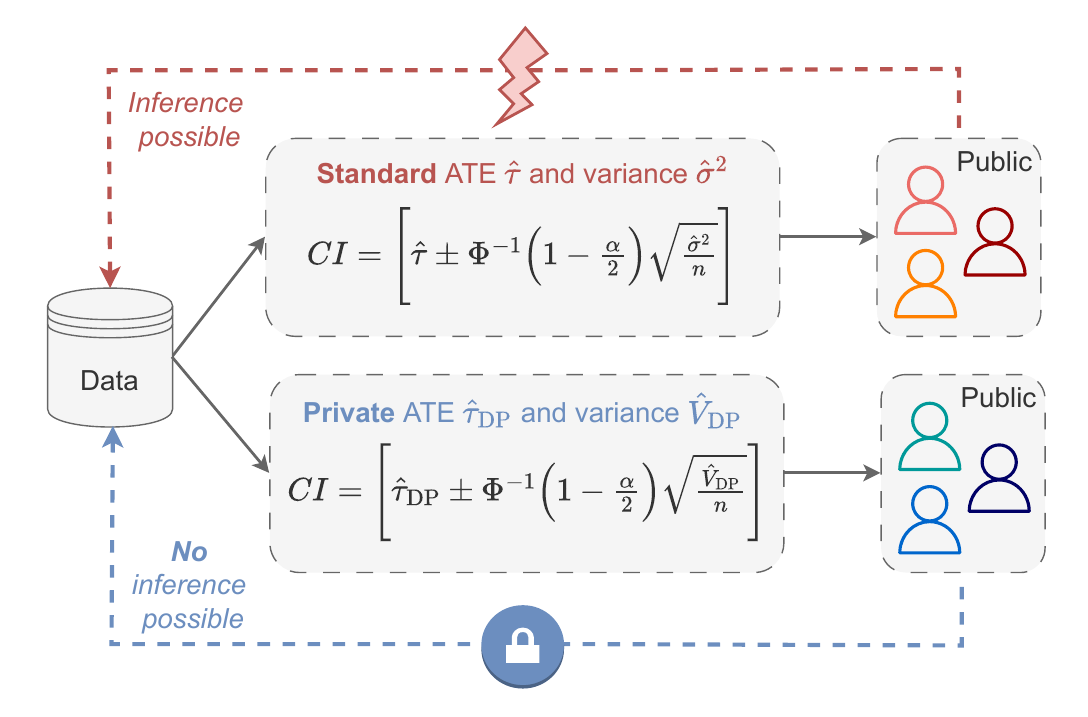}
    \vspace{-0.4cm}
    \caption{\textbf{Standard vs. private CIs for the ATE.} Standard CIs allow for inference about individual data samples, whereas CIs under DP are designed to prevent such inference.}
    \label{fig:privacy}
\end{figure}

Here, we develop \framework, the first framework designed to compute \emph{valid} CIs for the ATE under ($\varepsilon,\delta$)-DP (see Figure~\ref{fig:privacy}). Our framework is built upon the augmented inverse probability weighting (AIPW) estimator, which offers several practical advantages such as being model-agnostic and doubly robust \citep{Robins.1995, Wager.2024}. Notably, model-agnostic means that our framework can be used in conjunction with \emph{any} machine learning model. To obtain the CIs, we further introduce a novel variance estimator for the ATE satisfying DP. This allows us to construct \emph{valid}, model-agnostic CIs even in the presence of privatization noise.

\textbf{Our main contributions:}\footnote{We provide all code and data in our GitHub repository \url{https://github.com/m-schroder/PrivATE}
} (1)~We propose \framework, a novel ML framework to construct differentially private CIs for the ATE. (2)~We prove that our framework guarantees DP and provides valid CIs. (3)~We perform extensive experiments using synthetic and real-world medical data to demonstrate the effectiveness and validity of our framework.


\section{Related work}
\label{sec:related_work}
\vspace{-0.2cm}

We briefly summarize the existing literature relevant to our work, namely (i)~ATE estimation under DP, (ii)~differentially private CI construction, and (iii)~CIs for the ATE. We give an overview in Fig.~\ref{fig:literature}. 

\textbf{Differentially private ATE estimation:} Research on privacy mechanisms for ATE estimation is scarce. \citet{Lee.2019} design a privacy-preserving Horvitz-Thompson type estimator by privatizing the propensity score function. However, due to the employed regularization, the privatized ATE estimate is eventually biased, which prevents the construction of valid CIs. \citet{Guha.2024} provide a DP method for the estimation of ATEs of binary outcomes. Other works have analyzed private treatment effect estimation in other DP regimes like local DP \citep{Agarwal.2024, Ohnishi.2023} or label DP \citep{Javanmard.2024}, or focus on other causal quantities \citep[e.g.,][]{Ohnishi.2024, Schroder.2025, Yao.2024}. However, as explained in our introduction, computing differentially private CIs for the ATE is non-trivial.

\textbf{Differentially private CIs for non-causal quantities:} Several works focus on the construction of differentially private CIs, yet \emph{outside} of causal inference. Here, one stream constructs CIs for quantities estimated through parametric models where the underlying population is known and/or normally distributed \citep[e.g.,][]{Karwa.2018, DOrazio.2015, Ferrando.2022}. Other works have focused on constructing CIs for the parameters of a trained model, e.g., linear regression \citep{Barrientos.2019, Sheffet.2017}. A more general method has been proposed to construct asymptotic confidence intervals for parameters of any model trained by empirical risk minimization \citep{Wang.2019}; however, this method imposes strong assumptions on the convexity and the smoothness of the loss function, making it incompatible with many machine learning models. In sum, the above works focus on non-causal quantities and, therefore, are \emph{\underline{not}} directly applicable to ATE estimation. Therefore, we must first derive an asymptotically normal differentially private ATE estimate before we can then obtain private CIs through our framework. 

\begin{figure}[t]
    \vspace{-.1cm}
    \hspace{-0.45cm} \includegraphics[width=1.1\linewidth]{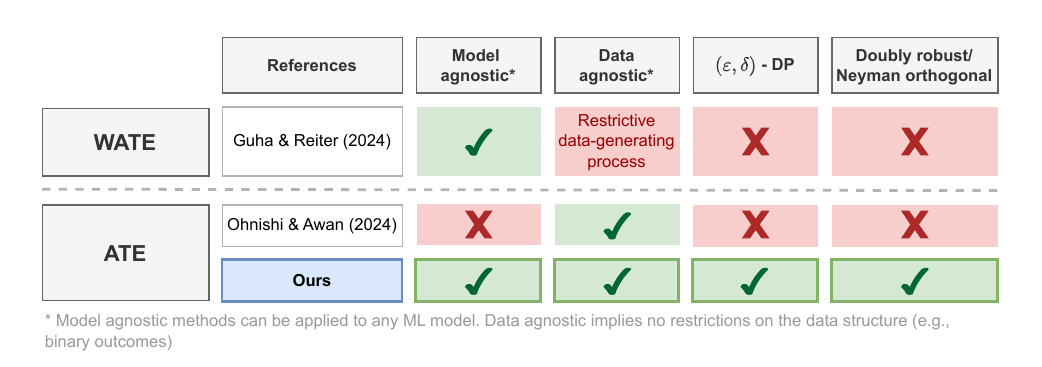}
    \vspace{-0.8cm}
    \caption{Key literature for estimating CIs of the ATE.}
    \label{fig:literature}
    \vspace{-0.5cm}
\end{figure}

\textbf{CIs for the ATE:} Several works construct CIs for the ATE based on $\sqrt{n}$-consistent estimators with asymptotic coverage guarantees \citep{Bang.2005, Hirano.2003}. Some works also focus on finite-sample CIs. However, these require significantly stronger assumptions on the data \citep{Aronow.2021}. Finally, \citet{Wang.2025} uses prediction-powered inference to derive CIs for the ATE from multiple datasets. However, \emph{\underline{no}} method ensures DP.

Literature on \emph{privacy mechanisms} for CIs of the ATE is scarce. We are aware of only two works \citep{Guha.2024,Ohnishi.2024}, yet both works have \emph{clear limitations}. \citet{Guha.2024} focus on the WATE (i.e., the weighted ATE, which, in principle, can be transformed into the ATE), but the method puts restrictive assumptions on the data-generating process, because of which the method is later \emph{\underline{not}} applicable. The approach by \citet{Ohnishi.2024} is limited to a sieve logistic regression and is \emph{\underline{not}} designed to deal with arbitrary ML models. Importantly, both works focus on a \emph{different} privacy definition, namely, pure $\varepsilon$-DP, while we focus on the more common ($\varepsilon,\delta$)-DP.  

\textbf{Research gap}: To our knowledge, we are the first to provide a general machine learning framework to construct valid CIs for the ATE under ($\varepsilon,\delta$)-DP.


\section{Problem formulation}
\label{sec:problem_formulation}

\textbf{Notation:} We denote random variables by capital letters $X$ and their realizations by small letters $x$. We refer to the probability distribution over $X$ by $P_X$, where we omit the subscript if it is apparent from the context. The probability mass function for discrete $X$ is given by $P(x) = P(X=x)$ and the probability density function w.r.t. the Lebesgue measure by $p(x)$. 

\textbf{Setting:} We consider a dataset $D := \{(X_i, A_i, Y_i)\}_{i=1, \ldots , n}$, consisting of observed confounders $X$, a binary treatment $A \in \{0,1\}$, and a bounded outcome $Y \in \mathcal{Y}$, where $Z_i := (X_i, A_i, Y_i) \sim P$ i.i.d., $Z_i \in \mathcal{Z}$, and $\mathcal{X}, \mathcal{Y}$ have bounded domains. We use the potential outcomes framework \citep{Rubin.2005} and denote the potential outcome of intervention $a$ by $Y(a)$. Furthermore, we define the propensity score as $\pi(x) := P(A=1\mid X=x)$ and the outcome function as $\mu(x,a) := \mathbb{E}[Y \mid X=x, A=a]$. 

\textbf{Objective:} We estimate the average treatment effect (ATE) $\tau$ under ($\varepsilon,\delta$)-DP and construct CIs for $\tau:= \mathbb{E}[Y(1) -Y(0)]$ so that the CIs (i)~capture both estimation and privatization uncertainty and (ii)~retain the DP guarantees. We make the standard assumptions in causal inference: positivity, consistency, and unconfoundedness \citep[e.g.,][]{Curth.2021, Rubin.2005}. Then, the ATE is identified as $\tau = \mathbb{E}[Y \mid A=1] - \mathbb{E}[Y \mid A=0]$. We provide an introduction to ATE estimation and an overview of the corresponding estimators in Supplement~\ref{sec:appendix_background}.

In the following, we derive differentially private CIs for the ATE estimated via the AIPW estimator. The AIPW estimator has several advantages \citep{Robins.1994,Wager.2024}: it is  unbiased and asymptotically normally distributed if at least one of the nuisances is correctly specified. Furthermore, the AIPW estimator is \emph{doubly robust} and Neyman-orthogonal, making it insensitive to small errors in the estimated nuisances $\hat{\mu}$ and $\hat{\pi}$ for nuisances converging fast enough to the oracle, i.e., that convergence rates are in $O(n^{-\beta_{\mu}})$ and $O(n^{-\beta_{\pi}})$, respectively, with $\beta_{\mu} + \beta_{\pi} \geq \frac{1}{2}$. We make this standard assumption of sufficiently fast convergence throughout our work \cite[e.g.,][]{Wager.2024, Wang.2025}.

\subsection{Differential privacy}

To ensure the privacy of sensitive information, ($\varepsilon,\delta$)-differential privacy (DP) guarantees that the inclusion or exclusion of any individual data point does not affect the estimated outcome by more than a pre-specified threshold defined by the \emph{privacy budget} $\varepsilon$ and $\delta$ \citep{Dwork.2006,Dwork.2009}. Specifically, the probability density of any outcome $y$ on dataset $D \in \mathcal{Z}^n$ must be \emph{$\varepsilon$-indistinguishable} from the probability density of the same outcome $y$ on a neighboring dataset $D^{'} \in \mathcal{Z}^n$ with a probability of at least $1-\delta$. 

\begin{definition}[Differential privacy \citep{Dwork.2009}]\label{def:diff_privacy}
    Two datasets $D$ and $D^{'}$ are called \emph{neighbors}, denoted as $D \sim D^{'}$, if their Hamming distance equals one, i.e., $d_\mathrm{H} (D, D^{'})=1$. 
    A function $\mathbf{f}_D: D \mapsto \mathbb{R}^d$ trained on dataset $D$ is $(\varepsilon, \delta)$-\emph{differentially private} if, for all neighboring datasets $D$, $D^{'} \in \mathcal{Z}^n$ and all measurable $S \subseteq \mathbb{R}^d$, it holds that
    \begin{align}
        P(\mathbf{f}_D \in S) \leq \exp(\varepsilon) \cdot P(\mathbf{f}_{D^{'}} \in S) + \delta.
    \end{align}
\end{definition}

DP can be achieved by employing the so-called \emph{Gaussian privacy mechanisms} \citep{Dwork.2014}, where the aim is to perturb the prediction by adding appropriately calibrated zero-centered Gaussian noise so that two predictions resulting from neighboring databases cannot be differentiated \citep[e.g.,][]{Chaudhuri.2011,Zhang.2022c}. 

\begin{definition}[Gaussian privacy mechanism \citep{Dwork.2014}]
\label{def:gaussian_mechanism}
    Let $\mathbf{f}: D \mapsto \mathbb{R}^d$ and let $\Delta_2(\mathbf{f}) = \sup _{D \sim D^{'}} ||\mathbf{f}_D - \mathbf{f}_{D^{'}}||$ denote the $l_2$-sensitivity if $\mathbf{f}$. Let  $\mathbf{U} \sim \mathcal{N}(0, \sigma^2)$ for $\sigma \geq \frac{1}{\varepsilon}\sqrt{2\ln{(1.25/\delta)}} \cdot \Delta_2(\mathbf{f})$. Then, $\mathbf{f}^{\mathup{DP}}_D  = \mathbf{f}_D + \mathbf{U}$ is $(\varepsilon, \delta)$-differentially private.
\end{definition}

\subsection{Problem statement}

Uncertainty quantification for the ATE is commonly presented in terms of confidence intervals.\footnote{For an asymptotically consistent ATE estimate $\hat{\tau}$ and its estimated variance $\hat{\sigma}^2$, the CI for a confidence level $(1-\alpha)$, $\alpha \in (0,1)$, is given by
$
\mathrm{CI} = \Big[ \hat{\tau} \pm \Phi^{-1}\Big(1-\frac{\alpha}{2}\Big) \sqrt{\frac{\hat{\sigma}^2}{n}}\Big],
$
where $\Phi^{-1}\Big(1-\frac{\alpha}{2}\Big)$ denotes the critical value given by the respective quantile of the standard normal distribution.} In our work, we aim at \emph{differentially private uncertainty quantification} of the ATE. Specifically, we aim to derive an $(\varepsilon, \delta)$-differentially private estimate $\hat{\tau}_{\mathup{DP}}$ and the corresponding differentially private CI
\begin{align}
\label{eq:CIs}
    \mathrm{CI}_{\mathup{DP}} := \Bigg[ \hat{\tau}_{\mathup{DP}} \pm \Phi^{-1}\Big(1-\frac{\alpha}{2}\Big) \sqrt{\frac{\hat{V}_{\mathup{DP}}}{n}}\Bigg]
\end{align}
for a confidence level $(1-\alpha)$, where $\hat{V}_{\mathup{DP}}$ denotes the \emph{augmented differentially private variance estimate} capturing the variation stemming from \emph{both} the estimation and the privatization process. Importantly, estimating $\hat{V}_{\mathup{DP}}$ is \emph{non-trivial}: setting $\hat{V}_{\mathup{DP}} = \hat{\sigma}^2_{\mathup{DP}}$ for a privatized variance estimate of the ATE would lead to \emph{undercoverage} of the final private CI (see Section~\ref{sec:method}). Therefore, we must carefully derive the augmented variance estimate $\hat{V}_{\mathup{DP}}$ to guarantee valid CIs.


\section{Our \framework framework}
\label{sec:method}

\subsection{Overview}

\begin{figure}[h]
    \centering
    \includegraphics[width=1\linewidth]{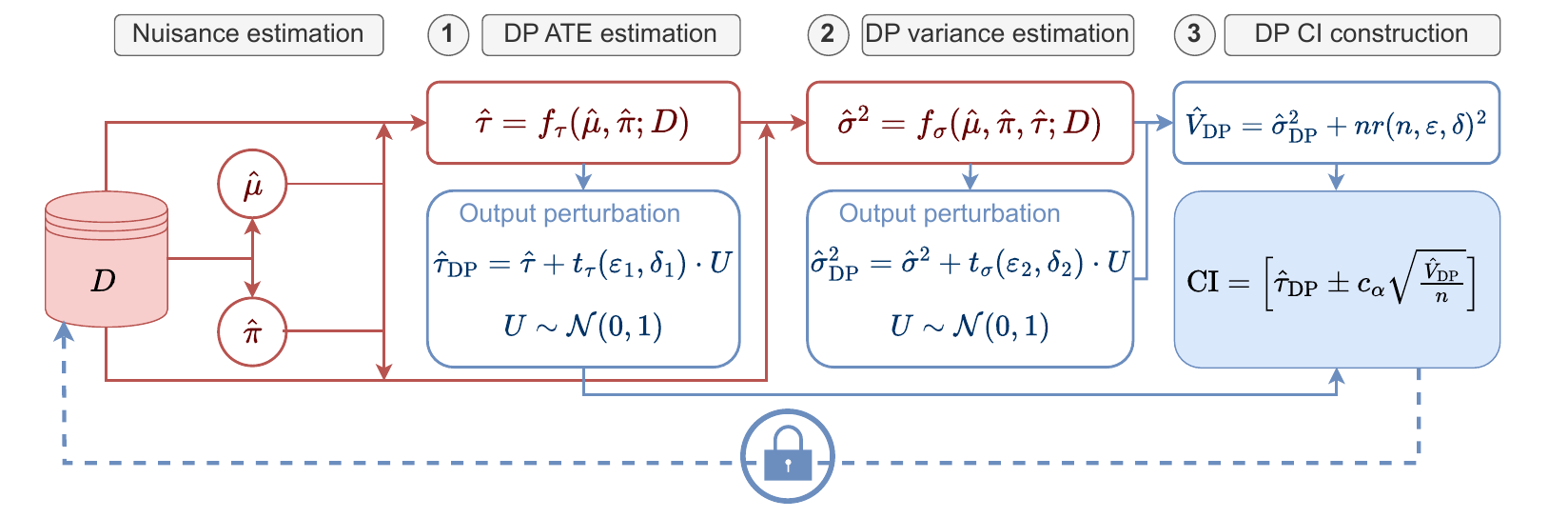}
    \vspace{-0.5cm}
    \caption{\textbf{Our proposed \framework framework} for constructing $(\varepsilon, \delta)$-differentially private CIs for the ATE.}
    \label{fig:framework}
    \vspace{-0.4cm}
\end{figure}

Our framework for constructing $(\varepsilon, \delta)$-differentially private CIs for the ATE proceeds in three steps: \circled{1}~\textbf{ATE privatization $\hat{\tau}_{\mathup{DP}}$:} We estimate the ATE $\hat{\tau}$ through doubly robust estimation and guarantee DP via output perturbation ($\rightarrow$ our Theorem~\ref{thm:ate_privatization}). \circled{2}~\textbf{Differentially private variance estimation $\hat{\sigma}_{\mathup{DP}}$:} To be able to construct valid CIs under DP, we additionally obtain an estimate of the variance of $\hat{\tau}$ satisfying DP guarantees ($\rightarrow$ our Theorem~\ref{thm:variance_privatization}). \circled{3}~\textbf{Constructing CIs for the ATE under DP:} We finally employ $\hat{\tau}_{\mathup{DP}}$ and $\hat{\sigma}_{\mathup{DP}}$ to construct differentially private CIs for the ATE. Notably, the resulting CIs capture the uncertainty stemming from the non-private estimation in Step~\circled{1} as well as the additional uncertainty arising due to the privatization in Step~\circled{2} ($\rightarrow$ our Theorem~\ref{thm:intervals}). We show our framework in Figure~\ref{fig:framework}.

Of note, our \framework framework is highly flexible. We do not make any assumptions about the parametric form of the model or the data distribution. Our framework thus is \emph{model-agnostic} and can be seamlessly instantiated with any machine learning model. Furthermore, our framework applies to observational and experimental data as we do \emph{not} assume the propensity function to be known.

\subsection{Mathematical preliminaries}
\label{sec:mathematical_background}

We focus on the AIPW estimator in our derivations as it (i)~consistently estimates the ATE and (ii)~is asymptotically normal, allowing us to derive valid CIs. The AIPW estimator is given by
\begin{align}
    \hat{\tau} = \frac{1}{n}\sum_{i=1}^n \Bigg[\hat{\mu}(X,1) - \hat{\mu}(X,0)
     + \frac{Y_i - \hat{\mu}(X,1)}{\hat{\pi}(X_i)}A_i \\
     - \frac{Y_i - \hat{\mu}(X,0)}{1 - \hat{\pi}(X_i)} (1- A_i)\Bigg].
\end{align}
Our work exploits a connection between model robustness and privacy in terms of the \emph{influence function} of the underlying estimator following former work in this field \citep[e.g.,][]{Dwork.2009, Schroder.2025}.

\begin{definition}\label{def:IF}
    Let $T$ be a functional of a distribution with $T(P)$ the parameter of interest. The \emph{influence function} (IF) of $T$ at data point $z$ under distribution $P$ is defined as
    \vspace{-0.1cm}
    \begin{align}
        \mathrm{IF}(z,T;P) := \lim_{t \mapsto 0} \frac{T((1-t)P + t\delta_z) - T(P)}{t},
    \end{align}
    where $\delta_z$ denotes the Dirac-delta functional at $z$. The \emph{gross-error sensitivity} of $T$ is defined as 
    \begin{align}
        \gamma(T,P) := \sup_{z \in \mathcal{Z}} \lVert \mathrm{IF}(z,T;P) \rVert.
    \end{align}
\end{definition}

\begin{lemma}\label{lem:DR-IF}
    We make the following assumptions on the nuisance functions: The nuisance functions are (i)~bounded, (ii)~estimated at rates $n^{-\beta_{\mu}}$ and $n^{-\beta_{\pi}}$ with $\beta_{\mu} + \beta_{\pi} \geq \frac{1}{2}$, and (iii)~in a local neighborhood of the true nuisance functions, i.e., there exists $\lambda_n$ decreasing in the sample size $n$, s.t. $\rVert \hat{\eta}-\eta_0\lVert{\infty} \leq \lambda_n$.
    Then, the IF of the AIPW learner is dominated by the influence function of the final mean estimation in the second step. 
    The influence stemming from training the nuisance estimators $\hat{\pi}$ and $\hat{\mu}$ is negligible in the privatization step.
\end{lemma}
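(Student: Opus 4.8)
The plan is to represent the AIPW estimator as a two-stage functional and then show, via a Gateaux/chain-rule expansion around the true nuisances, that the contribution of the nuisance-estimation stage to the overall influence function is of strictly smaller order than the contribution of the final averaging step. Concretely, I would write $\hat{\tau} = T(P_n)$ with $T(P) = \mathbb{E}_P[\psi(Z;\eta(P))]$, where $\psi$ is the AIPW score and $\eta(P) = (\mu(P), \pi(P))$ collects the nuisance functionals that are themselves estimated from the data. Perturbing along $P_t = (1-t)P + t\delta_z$ and differentiating at $t=0$ splits $\mathrm{IF}(z,T;P)$ into a \emph{direct} term $\psi(z;\eta(P)) - T(P)$, coming from the outer expectation with the nuisances held fixed, and an \emph{indirect} term stemming from the dependence of $\eta(P_t)$ on $t$. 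The goal is to prove that the direct term is the IF of the final mean estimation and that the indirect term is asymptotically negligible.

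The central tool is Neyman orthogonality of the AIPW score. First I would invoke the property (already noted for AIPW in the problem formulation) that $\partial_r \, \mathbb{E}_P[\psi(Z; \eta_0 + r(\eta-\eta_0))]\big|_{r=0} = 0$, so the first-order Gateaux derivative of the expected score with respect to the nuisances vanishes at the truth $\eta_0$. Applied to the indirect term, this kills its leading-order part whenever the nuisances are evaluated at $\eta_0$, which is precisely why no first-order nuisance influence survives.

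Next I would bound the residual indirect contribution that arises because the procedure uses $\hat{\eta}$ rather than $\eta_0$. Here the product (doubly robust) structure of the AIPW score is essential: its second-order remainder factorizes into a product of the errors $(\hat{\mu}-\mu_0)$ and $(\hat{\pi}-\pi_0)$, so under assumption~(ii) this term is of order $n^{-(\beta_{\mu} + \beta_{\pi})} = o(n^{-1/2})$. Boundedness (assumption~(i)) keeps the score and its derivatives uniformly controlled, while the local-neighborhood condition (assumption~(iii)) guarantees the expansion is valid with higher-order terms dominated by $\lambda_n$. Collecting terms, the indirect contribution is negligible relative to the $O(1)$ direct term; consequently the gross-error sensitivity $\gamma(T,P) = \sup_z \lVert \mathrm{IF}(z,T;P)\rVert$ that calibrates the Gaussian privatization noise is determined, up to $o(1)$ factors, by the IF of the final mean estimation step alone.

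I expect the main obstacle to be making the differentiation of the estimated-nuisance map $\eta(P)$ rigorous, since the nuisances are outputs of arbitrary machine learning models whose functional dependence on $P$ is not given in closed form. The argument circumvents this by leaning on Hadamard-type differentiability together with the orthogonality condition, which removes the need for an explicit derivative of $\eta(P)$, and by controlling the leftover error through the \emph{product} remainder bound rather than term-by-term, so that only the aggregate rate $\beta_{\mu}+\beta_{\pi}\geq \tfrac{1}{2}$ — and not the individual nuisance behaviour — must be assumed.
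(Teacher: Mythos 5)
Your proposal is correct and follows essentially the same route as the paper's proof: both decompose the two-stage influence function into a direct term (the IF of the final mean) plus an indirect nuisance term, kill the latter's leading order via Neyman orthogonality at $\eta_0$, and control the residual from plugging in $\hat{\eta}$ through a Taylor/second-order remainder governed by the rate condition $\beta_{\mu}+\beta_{\pi}\geq\tfrac{1}{2}$ and the local-neighborhood assumption. The only cosmetic difference is that the paper instantiates the explicit two-stage IF formula of \citet{Zhelonkin.2012} (verifying $M=1$ and that the cross-derivative integral vanishes, citing \citet{Chernozhukov.2018} for the remainder when the propensity is unknown), whereas you derive the same decomposition directly by Gateaux differentiation along $P_t=(1-t)P+t\delta_z$.
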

\vspace{-0.5cm}
\begin{proof}
    We prove Lemma~\ref{lem:DR-IF} in Supplement~\ref{sec:appendix_proofs}.
\end{proof}

\textbf{Remark:}
Assumptions (i) states a weak regularity condition, restricting the nuisance predictions to fall into the bounded domains $\mathcal{A}, \mathcal{Y}$. Assumption (ii) postulates the standard convergence rate requirement for Neyman-orthogonality introduced in Section~\ref{sec:problem_formulation}. Finally, assumption (iii) requires that the nuisances are estimated sufficiently well. Together with the Neyman-orthogonality, it renders the influence stemming from the first stage estimation negligible for privatization. 

Lemma~\ref{lem:DR-IF} is of great benefit: Due to the construction of the mean estimation, the final AIPW estimator is \emph{not} sensitive to small perturbations in the nuisance functions. 
This allows us to employ the complete data for the second-stage averaging, as we do not need to privatize the nuisance estimates. Together with its consistency and asymptotic normality, the AIPW estimator is a perfect underlying method for providing differentially private confidence intervals of the ATE. 

\textbf{Remark:}
To avoid making the local neighborhood assumption, one can alternatively privatize the nuisances by a DP method of choice. The privatization neither affects the efficiency of the estimator \citep{Schroder.2025} nor the validity of the CIs. Specifically, we split the dataset $D$ into disjoint subsets $D_1$ and $D_2$. On $D_1$, we estimate $\hat{\pi}, \hat{\mu}$ under $(\varepsilon_1/2, \delta_1/2)$-DP, which are then employed to estimate $\hat{\tau}$ on $D_2$. Note that data splitting is common to prevent overfitting of the estimators \citep[e.g.,][]{Foster.2019, Wager.2024}.

\subsection{Framework}

\subsection*{\circled{1} Differentially private ATE estimation}
\label{sec:ate_privatization}

In the first step, we privatize the original ATE estimate $\hat{\tau}$. Directly employing Definition~\ref{def:gaussian_mechanism} requires calculating the model sensitivity, which can be computationally expensive or even infeasible. We instead exploit the fact that a suitably scaled gross-error sensitivity $\gamma$ can upper bound the global sensitivity \citep{AvellaMedina.2021}. Therefore, we first derive the IF of the AIPW estimator so that we are then able to privatize $\hat{\tau}$ based on $\gamma$.

For the AIPW estimator, the \emph{influence function score} is given by \citep{Robins.1995, Wager.2024}
\footnotesize
\begin{align}
    \Gamma_{\hat{\eta}}(z)
    = &\left( \frac{a_i}{\hat{\pi}(x_i)} - \frac{1-a_i}{1-\hat{\pi}(x_i)} \right) y_i \\
     &- \frac{\left(1-\hat{\pi}(x_i)\right)\hat{\mu}_1(x_i) + \hat{\pi}(x_i)\hat{\mu}_0(x_i)}{\hat{\pi}(x) \left(1-\hat{\pi}(x_i)\right)} (a_i - \hat{\pi}(x_i)).
     \nonumber
\end{align}
\normalsize
Since the AIPW estimator can equivalently be written as $\hat{\tau} = \sum_{i=1}^n \Gamma_{\hat{\eta}}(z_i)$,
the IF of the AIPW estimator directly follows via\footnote{For an introduction to the construction of IFs for common statistics such as the mean, we refer to \citep{Hines.2022}.}
\begin{align}
    \mathrm{IF}^{\mathup{AIPW}}(z,\tau_0;P) = \Gamma_{\hat{\eta}}(z) - \hat{\tau}.
\end{align}

\begin{theorem}[ATE privatization]
\label{thm:ate_privatization}
    Let $z:= (a,x,y)$ define a data sample following the joint distribution $\mathcal{Z}$, and let $\hat{\eta} = (\hat{\mu},\hat{\pi})$ be the nuisance functions estimated at rates $n^{-\beta_{\mu}}$ and $n^{-\beta_{\pi}}$ with $\beta_{\mu} + \beta_{\pi} \geq \frac{1}{2}$, and $\hat{\tau}$ the non-private ATE estimate. Furthermore, let $D$ be the training dataset with $\vert D \vert$ = $n$. For $U \sim \mathcal{N}(0, 1)$, $(\varepsilon, \delta)$-DP is fulfilled by
    \footnotesize
    \vspace{-0.2cm}
    \begin{align} 
        \hat{\tau}_\mathup{DP} := \hat{\tau} + \sup_{z \in \mathcal{Z}} \big \lVert
        \Gamma_{\hat{\eta}}(z) - \hat{\tau} \big \rVert \cdot \frac{5\sqrt{2\ln(n)\ln{(2/\delta)}}}{\varepsilon n} \cdot U,
    \end{align}
    \normalsize
\end{theorem}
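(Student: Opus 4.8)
The goal is to exhibit $\hat{\tau}_{\mathup{DP}}$ as an instance of the Gaussian mechanism of Definition~\ref{def:gaussian_mechanism}, so the plan is to reduce the claim to a bound on the $l_2$-sensitivity $\Delta_2(\hat{\tau}) = \sup_{D \sim D'}\lVert \hat{\tau}(D) - \hat{\tau}(D')\rVert$ and then read off the required noise scale. First I would invoke Lemma~\ref{lem:DR-IF}: under the stated assumptions on the nuisances (boundedness, the rate condition $\beta_\mu + \beta_\pi \geq \tfrac12$, and the local-neighbourhood condition), the influence of the first-stage nuisance estimation on the final estimator is negligible in the privatization step. This lets me treat $\hat{\eta} = (\hat{\mu}, \hat{\pi})$ as fixed when differencing over neighbouring datasets, so that $\hat{\tau} = \tfrac1n \sum_{i} \Gamma_{\hat{\eta}}(z_i)$ reduces to an empirical average of the per-sample scores.

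With $\hat{\eta}$ held fixed, replacing a single sample $z_j$ by $z_j'$ changes exactly one summand, so $\hat{\tau}(D') - \hat{\tau}(D) = \tfrac1n\big(\Gamma_{\hat{\eta}}(z_j') - \Gamma_{\hat{\eta}}(z_j)\big)$. A triangle-inequality step then bounds this by the gross-error sensitivity: since $\mathrm{IF}^{\mathup{AIPW}}(z,\tau_0;P) = \Gamma_{\hat{\eta}}(z) - \hat{\tau}$,
\begin{align}
    \lVert \hat{\tau}(D') - \hat{\tau}(D)\rVert
    \leq \tfrac1n\big(\lVert \Gamma_{\hat{\eta}}(z_j') - \hat{\tau}\rVert + \lVert \Gamma_{\hat{\eta}}(z_j) - \hat{\tau}\rVert\big)
    \leq \tfrac{2}{n}\,\gamma,
\end{align}
where $\gamma = \sup_{z \in \mathcal{Z}}\lVert \Gamma_{\hat{\eta}}(z) - \hat{\tau}\rVert$ is precisely the quantity appearing in the noise scale. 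Because the supremum runs over the \emph{entire} bounded domain $\mathcal{Z}$ (using the bounded covariate/outcome domains and positivity, which keep $\Gamma_{\hat{\eta}}$ uniformly bounded), $\gamma$ upper-bounds the local sensitivity on every neighbour and behaves almost like a global constant.

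It then remains to convert $\Delta_2 \lesssim 2\gamma/n$ into the stated Gaussian noise. Plugging $\Delta_2$ into Definition~\ref{def:gaussian_mechanism} already yields noise of order $\tfrac{\gamma}{n\varepsilon}\sqrt{\ln(1/\delta)}$; the remaining work is to justify the inflated constant $5$ and the extra $\sqrt{\ln n}$ factor, which I would obtain by invoking the gross-error-sensitivity-to-global-sensitivity calibration of \citet{AvellaMedina.2021}. Concretely, I would verify that the AIPW score satisfies the regularity hypotheses of that result and apply it to produce the explicit constant and logarithmic factor, after which $\hat{\tau}_{\mathup{DP}} = \hat{\tau} + \gamma\cdot\tfrac{5\sqrt{2\ln(n)\ln(2/\delta)}}{\varepsilon n}\cdot U$ with $U \sim \mathcal{N}(0,1)$ is $(\varepsilon,\delta)$-DP by construction.

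\textbf{Main obstacle.} The delicate point is \emph{not} the first-order sensitivity bound but the fact that the noise scale $\gamma$ and the centre $\hat{\tau}$ are \emph{themselves data-dependent} and differ across neighbouring datasets by $O(1/n)$. Calibrating the Gaussian mechanism with a data-dependent (plug-in) scale naively breaks DP, so the crux is to show that the surplus in the noise---the factor $5$ together with $\sqrt{\ln n}$---supplies exactly enough slack to absorb this $O(1/n)$ variation while preserving the $(\varepsilon,\delta)$ guarantee. This is precisely what the robust-statistics machinery of \citet{AvellaMedina.2021} provides, and reconciling its hypotheses and constants with our AIPW influence function is the step I expect to require the most care.
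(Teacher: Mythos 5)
Your proposal is correct and follows essentially the same route as the paper: the paper likewise treats the first-stage nuisances as negligible for privatization (Lemma~\ref{lem:DR-IF}), identifies $\gamma_\tau = \sup_{z}\lVert \Gamma_{\hat{\eta}}(z)-\hat{\tau}\rVert$ as the gross-error sensitivity, and invokes \citet{AvellaMedina.2021} to bound the \emph{smooth sensitivity} of \citet{Nissim.2007} by $\tfrac{\sqrt{\log n}}{n}\gamma_\tau$, which is exactly where your factor $5$ and $\sqrt{\ln n}$ come from and which resolves the data-dependent-scale obstacle you correctly flag as the crux. The only cosmetic difference is that the paper verifies Avella-Medina's hypotheses via the M-estimator structure ($\hat{\tau}$ solves $\sum_i \mathrm{IF}^{\mathup{AIPW}}(z_i,\tau_0;P)=0$ with $\partial\,\mathrm{IF}/\partial\hat{\tau}=1$ and $\gamma_\tau \leq K$ by boundedness) rather than through your explicit $2\gamma/n$ local-sensitivity computation, which is a harmless detour.
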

\vspace{-0.4cm}
\begin{proof}
    We prove Theorem~\ref{thm:ate_privatization} in Supplement~\ref{sec:appendix_proofs}.
\end{proof}

\subsection*{\circled{2} Differentially private variance estimation}
\label{sec:variance_privatization}

Solely constructing CIs based on the private ATE estimate is \emph{not} sufficient to guarantee DP, as estimating the variance necessary for constructing the CIs requires additional queries to the underlying data. Therefore, we must now derive a differentially private estimate of the variance, $\hat{\sigma}^2_\mathup{DP}$, for $\hat{\tau}_\mathup{DP}$.

The variance estimate depends on the estimated nuisance functions. Standard variance estimators are, in general, \emph{only} consistent if \emph{both} nuisance functions are correctly specified \citep{Gruber.2012}. Thus, straightforward estimators, such as the influence function-based estimator ~\citep{Deville.1999}, are not doubly robust and, therefore, can easily lead to \emph{undercoverage} of the CIs in practice, which is undesirable in safety-critical applications. As a remedy, we estimate $\hat{\sigma}_{\mathup{AIPW}}^2$ by a \emph{doubly robust variance estimator}.

It is common to obtain such an estimator through nonparametric bootstrapping \citep{Funk.2011, ShookSa.2024}. However, bootstrapping requires many queries to the data, which is disadvantageous for estimation under DP guarantees, as each query requires a separate privacy budget. 

Instead, in our work, we estimate $\hat{\sigma}_{\mathup{AIPW}}^2$ via the \emph{empirical sandwich estimator} \citep{Huber.1967}. This has the following benefits: (i)~the empirical sandwich estimator is doubly robust, and  (ii)~it consistently estimates the variance of the AIPW estimator \citep{ShookSa.2024}. Importantly, for the AIPW estimator, the sandwich estimator can be directly represented by the IF (see Supplement~\ref{sec:appendix_background})
\begin{align}
    \hat{\sigma}_{\mathup{AIPW}}^2 = \frac{1}{n}\sum_{i=1}^n  \mathrm{IF}^{\mathup{AIPW}}(z,\tau_0;P)^2.
\end{align}    
To derive a differentially private version of $\hat{\sigma}_{\mathup{AIPW}}^2$, we now follow the same strategy as in Step \circled{1}. We first derive the IF of  $\hat{\sigma}_{\mathup{AIPW}}^2$ and then perform output perturbation employing the gross-error sensitivity to upper-bound the smooth sensitivity of $\hat{\sigma}_{\mathup{AIPW}}^2$. 

\begin{lemma}
\label{lem:IF_variance}
    The influence function of $\hat{\sigma}_{\mathup{AIPW}}^2$ is given by
    \begin{align}
        \mathrm{IF}^{\sigma}(z, \sigma_0, P) = (\Gamma_{\hat{\eta}}(z) - \hat{\tau})^2 - \hat{\sigma}_{\mathup{AIPW}}^2.
    \end{align}
\end{lemma}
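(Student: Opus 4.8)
The plan is to recognize $\hat{\sigma}_{\mathup{AIPW}}^2$ as the plug-in estimator of the variance functional $T(P) = \mathbb{E}_P\big[(\Gamma_{\hat{\eta}}(Z) - \mathbb{E}_P[\Gamma_{\hat{\eta}}(Z)])^2\big]$ and to compute its Gateaux derivative along the contamination path $P_t = (1-t)P + t\delta_z$ directly from Definition~\ref{def:IF}. By Lemma~\ref{lem:DR-IF}, the contribution of the first-stage nuisance estimation is negligible in the influence calculation, so I treat the score map $z \mapsto \Gamma_{\hat{\eta}}(z)$ as a fixed function and let all of the $P$-dependence enter through the two expectations appearing in $T(P)$.

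Next I would decompose the variance functional into a second-moment part and a squared first-moment part. Writing $m_1(P) = \mathbb{E}_P[\Gamma_{\hat{\eta}}(Z)]$ and $m_2(P) = \mathbb{E}_P[\Gamma_{\hat{\eta}}(Z)^2]$, we have $T(P) = m_2(P) - m_1(P)^2$. Both $m_1$ and $m_2$ are linear functionals of $P$, so the standard influence-function rule for a mean gives $\mathrm{IF}(z, m_1; P) = \Gamma_{\hat{\eta}}(z) - m_1(P)$ and $\mathrm{IF}(z, m_2; P) = \Gamma_{\hat{\eta}}(z)^2 - m_2(P)$. Applying the chain rule to the quadratic term, $\frac{d}{dt} m_1(P_t)^2\big|_{t=0} = 2\, m_1(P)\,\mathrm{IF}(z, m_1; P)$, I obtain
\[
\mathrm{IF}(z, T; P) = \big(\Gamma_{\hat{\eta}}(z)^2 - m_2(P)\big) - 2\, m_1(P)\big(\Gamma_{\hat{\eta}}(z) - m_1(P)\big).
\]

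Finally I would verify that this expression collapses to the claimed form. Expanding the target $(\Gamma_{\hat{\eta}}(z) - m_1(P))^2 - T(P) = \Gamma_{\hat{\eta}}(z)^2 - 2\, m_1(P)\Gamma_{\hat{\eta}}(z) + m_1(P)^2 - m_2(P) + m_1(P)^2$ and comparing it term-by-term with the derivative above shows the two agree. Evaluating at the empirical distribution identifies $m_1(P)$ with $\hat{\tau}$ and $T(P)$ with $\hat{\sigma}_{\mathup{AIPW}}^2$, which yields
\[
\mathrm{IF}^{\sigma}(z, \sigma_0, P) = (\Gamma_{\hat{\eta}}(z) - \hat{\tau})^2 - \hat{\sigma}_{\mathup{AIPW}}^2.
\]
The main obstacle---and the one place a naive computation goes wrong---is the double dependence on $P$: the contamination perturbs both the outer expectation and the centering term $\mathbb{E}_P[\Gamma_{\hat{\eta}}]$ inside the square, so the cross-term $-2\, m_1(P)\,\mathrm{IF}(z, m_1; P)$ must \emph{not} be dropped. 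This is precisely the term that converts the raw second moment into a centered variance and makes the resulting influence function self-centering, i.e.\ mean zero under $P$, which serves as a useful sanity check on the final expression.
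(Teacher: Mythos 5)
Your proof is correct, and it reaches the statement by a slightly different and more explicit route than the paper's own proof. The paper's argument is a one-liner: it regards $\hat{\sigma}_{\mathup{AIPW}}^2$ as the empirical mean of the single variable $W(z) = (\Gamma_{\hat{\eta}}(z) - \hat{\tau})^2$ --- with the centering $\hat{\tau}$ held fixed --- and invokes the standard influence function of a mean, $W(z) - \mathbb{E}_P[W]$, which immediately yields $(\Gamma_{\hat{\eta}}(z)-\hat{\tau})^2 - \hat{\sigma}_{\mathup{AIPW}}^2$. You instead differentiate the full variance functional $T(P)=m_2(P)-m_1(P)^2$ along the contamination path, keeping the $P$-dependence of the centering, and then verify algebraically that the cross-term $-2\,m_1(P)\big(\Gamma_{\hat{\eta}}(z)-m_1(P)\big)$ recombines with the raw second-moment influence function into the self-centering form; your expansion checks out. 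The two routes coincide for a structural reason your computation makes visible: writing $T(P)=\mathbb{E}_P\big[(\Gamma_{\hat{\eta}}(Z)-c)^2\big]$ with $c=m_1(P)$, the partial derivative in $c$ equals $-2\big(m_1(P)-c\big)$, which vanishes at $c=m_1(P)$, so the estimation of the centering ($\hat{\tau}$ being the plug-in of $m_1$) contributes nothing at first order --- this is exactly what licenses the paper's shortcut of treating $\hat{\tau}$ as fixed, a step the paper leaves implicit. What the paper's approach buys is brevity; what yours buys is an explicit justification of that step together with the mean-zero sanity check $\mathbb{E}_P\big[\mathrm{IF}^{\sigma}(Z,\sigma_0,P)\big]=0$. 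Your appeal to Lemma~\ref{lem:DR-IF} to freeze the score map $z\mapsto\Gamma_{\hat{\eta}}(z)$ mirrors the paper's implicit treatment of the estimated nuisances as fixed, so there is no gap on that front either.
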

\begin{proof}
    We prove Theorem~\ref{lem:IF_variance} in Supplement~\ref{sec:appendix_proofs}.
\end{proof}

Additionally, we observe that the differentially private variance is naturally positive. To impose this restriction during the privatization process, we employ a \emph{truncated privacy mechanism}: 
\emph{Let $\mathbf{f}: D \mapsto \mathbb{R}^d$ and let $\Delta_2(\mathbf{f}) = \sup _{D \sim D^{'}} ||\mathbf{f}_D - \mathbf{f}_{D^{'}}||$ denote the $l_2$-sensitivity of $\mathbf{f}$. Furthermore, let  $\mathbf{U} \sim \mathcal{N}(0, \sigma^2)$ for $\sigma \geq \frac{1}{\varepsilon}\sqrt{2\ln{(1.25/\delta)}} \cdot \Delta_2(\mathbf{f})$. Then, 
\begin{equation}
\mathbf{f}^{\mathup{DP}}_D  = \max \{ 0, \mathbf{f}_D + \mathbf{U} \}
\end{equation}
is $(\varepsilon, \delta)$-differentially private.}
We prove the statement in the proof of Theorem~\ref{thm:variance_privatization}.

\begin{theorem}[Differentially private variance estimation]
\label{thm:variance_privatization}
    Let $z:= (a,x,y)$ define a data sample following the joint distribution $\mathcal{Z}$, and let $\hat{\eta} = (\hat{\mu},\hat{\pi})$ be the nuisance functions estimated at rates $n^{-\beta_{\mu}}$ and $n^{-\beta_{\pi}}$ with $\beta_{\mu} + \beta_{\pi} \geq \frac{1}{2}$. Furthermore, let $D$ be the training dataset with $\vert D \vert$ = $n$.  For $U \sim \mathcal{N}(0, 1)$, we define
    \footnotesize
    \begin{align}
        \hat{\sigma}^2_\mathup{DP} := &\max \bigg\{0, \hat{\sigma}_{\mathup{AIPW}}^2 \\ &+ \sup_{z \in \mathcal{Z}} \big \lVert
        \mathrm{IF}^{\sigma}(z, \sigma_0, P) \big \rVert \cdot \frac{5\sqrt{2\ln(n)\ln{(2/\delta)}}}{\varepsilon n} \cdot U \bigg\}.
    \end{align}
    \normalsize
    \noindent
    Then, $\hat{\sigma}^2_\mathup{DP}$ is $(\varepsilon, \delta)$-differentially private. 
\end{theorem}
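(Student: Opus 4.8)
The plan is to establish two things: first, that the stated mechanism satisfies $(\varepsilon,\delta)$-DP before the truncation, and second, that the truncation $\max\{0,\cdot\}$ preserves the guarantee. I would structure the argument to mirror the proof of Theorem~\ref{thm:ate_privatization}, since the variance estimator $\hat{\sigma}^2_{\mathup{AIPW}} = \frac{1}{n}\sum_i \mathrm{IF}^{\mathup{AIPW}}(z,\tau_0;P)^2$ has exactly the same averaged-functional structure as $\hat{\tau}$, only with the squared influence scores. By Lemma~\ref{lem:IF_variance}, its influence function is $\mathrm{IF}^{\sigma}(z,\sigma_0,P) = (\Gamma_{\hat{\eta}}(z)-\hat{\tau})^2 - \hat{\sigma}^2_{\mathup{AIPW}}$, so the gross-error sensitivity is $\gamma(\sigma^2,P) = \sup_{z\in\mathcal{Z}}\lVert \mathrm{IF}^{\sigma}(z,\sigma_0,P)\rVert$. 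The crux is the result of \citet{AvellaMedina.2021} that a scaled gross-error sensitivity upper-bounds the smooth sensitivity of the statistic, which in turn calibrates the Gaussian mechanism of Definition~\ref{def:gaussian_mechanism}. I would invoke that bound to show the noise scale $\gamma \cdot \frac{5\sqrt{2\ln(n)\ln(2/\delta)}}{\varepsilon n}$ is sufficient, yielding $(\varepsilon,\delta)$-DP for the pre-truncation output $\hat{\sigma}^2_{\mathup{AIPW}} + \gamma\cdot\frac{5\sqrt{2\ln(n)\ln(2/\delta)}}{\varepsilon n}\cdot U$.

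The key steps, in order, are as follows. First I would recall that $\hat{\sigma}^2_{\mathup{AIPW}}$ is an M-estimator / averaged statistic whose smooth sensitivity is controlled by its influence function, and verify that under the Lemma~\ref{lem:DR-IF} assumptions the contribution of the first-stage nuisance estimation to the influence function is negligible, so that $\mathrm{IF}^{\sigma}$ from Lemma~\ref{lem:IF_variance} is indeed the relevant object. Second, I would cite the specific sensitivity-to-IF bound from \citet{AvellaMedina.2021}, which, combined with the boundedness of the nuisances and of $\mathcal{Y}$ (Lemma~\ref{lem:DR-IF}(i)), ensures $\gamma(\sigma^2,P)<\infty$ and that the stated scaling factor dominates the required $\sigma$ in Definition~\ref{def:gaussian_mechanism}. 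Third, I would apply the Gaussian mechanism to conclude DP of the unconstrained output. Finally, I would handle the truncation by a post-processing argument: the map $x \mapsto \max\{0,x\}$ is a deterministic function applied to the privatized output and does not touch the data, so by the post-processing invariance of DP the truncated quantity $\hat{\sigma}^2_{\mathup{DP}}$ inherits the same $(\varepsilon,\delta)$ guarantee. This last step also doubles as the proof of the truncated-mechanism lemma stated in the text just before the theorem.

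I expect the main obstacle to be the sensitivity bound itself, i.e., rigorously justifying that the gross-error sensitivity of $\hat{\sigma}^2_{\mathup{AIPW}}$ controls its smooth (or global) sensitivity with the specific constant $5\sqrt{2\ln(n)\ln(2/\delta)}/(\varepsilon n)$. The factor $\sqrt{\ln(n)}$ and the constant $5$ suggest a particular instantiation of the \citet{AvellaMedina.2021} smooth-sensitivity framework, where one must track how the finite-sample deviation of the statistic from its influence-function linearization scales, and control the remainder term uniformly over neighboring datasets. The squaring in $\mathrm{IF}^{\sigma}$ means the relevant quantities are products of influence scores, so I would need the boundedness assumptions to guarantee a finite supremum and to ensure the second-order remainder in the von Mises expansion of $\hat{\sigma}^2_{\mathup{AIPW}}$ remains negligible at the claimed rate. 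Once that bound is in place, the Gaussian-mechanism and post-processing steps are routine; the truncation in particular is harmless precisely because it is data-independent.
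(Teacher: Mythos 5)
Your proposal follows essentially the same route as the paper's proof: both exploit the M-estimator structure of $\hat{\sigma}^2_{\mathup{AIPW}}$ (it solves $\sum_{i=1}^n \mathrm{IF}^{\sigma}(z_i,\sigma_0;P)=0$ with unit derivative), use boundedness of the data domain to get a finite gross-error sensitivity $\gamma_\sigma$, invoke Theorem~1 of \citet{AvellaMedina.2021} to bound the smooth sensitivity by $\frac{\sqrt{\log(n)}}{n}\gamma_\sigma$, calibrate the noise at scale $\frac{5\sqrt{2\ln(n)\ln(2/\delta)}}{\varepsilon n}\gamma_\sigma$, and dispose of the truncation $\max\{0,\cdot\}$ by the post-processing property, exactly as the paper does. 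Your observation that the constant $5$ and the $\sqrt{\ln n}$ factor signal the smooth-sensitivity instantiation (rather than the plain global-sensitivity Gaussian mechanism of Definition~\ref{def:gaussian_mechanism}) is precisely how the paper's proof proceeds, via the \citet{Nissim.2007} mechanism stated in the proof of Theorem~\ref{thm:ate_privatization}.
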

\begin{proof}
    We prove Theorem~\ref{thm:variance_privatization} in Supplement~\ref{sec:appendix_proofs}.
\end{proof}

\subsection*{\circled{3} Constructing CIs for the ATE under DP}
\label{sec:intervals}

Simply privatizing the variance to ensure DP is not sufficient to construct valid CIs in general:
\emph{only} in the \emph{asymptotic regime} the intervals are valid, as no privatization is needed there and, as a result, the privatization noise equals zero. However, in finite samples,  the privatized variance \emph{may underestimate the true variance}, leading to undercoverage in the CIs.

Observe that the AIPW estimator $\hat{\tau}$ is a consistent estimator for the true ATE $\tau_0$ and asymptotically normally distributed with variance $\sigma^2_{\mathup{AIPW}}$ \citep[e.g.,][]{Wager.2024}, i.e.,
\begin{align}
    \sqrt{n}\left(\hat{\tau} - \tau_0\right) \rightarrow \mathcal{N} \left(0, \sigma^2_{\mathup{AIPW}}\right) .
\end{align}
For $\hat{\tau}_{\mathup{DP}}$, by Slutsky's theorem, the same asymptotic guarantee holds. This implies that the privatization asymptotically does not have an effect, since, for infinite data, no privatization is needed. 

In finite settings, however, the privatization introduces additional noise on $\hat{\tau}$ (and thus variance), which we account for in the following when constructing our CIs. This is in line with prior literature postulating that DP inference should be conservative to ensure reliability even in small sample sizes \citep[e.g.,][]{Karwa.2017}. Observe that $(\hat{\tau}_{\mathup{DP}} - \hat{\tau})=:B$ is itself a random variable (for fixed $\hat{\tau}$) following distribution $\mathcal{N} \left(0, r(\varepsilon, \delta, n)^2\right)$, where 
\footnotesize
\begin{align}
    r(\varepsilon, \delta, n) := \sup_{z \in \mathcal{Z}} \big \lVert
        \Gamma_{\hat{\eta}}(z) - \hat{\tau} \big \rVert \cdot \frac{5\sqrt{2\ln(n)\ln{(2/\delta)}}}{\varepsilon n}
\end{align}
\normalsize
and therefore
\begin{align} \label{eq:splitting}
    \sqrt{n}\left(\hat{\tau}_{\mathup{DP}} - \tau_0\right) = \sqrt{n}B + \sqrt{n}\left(\hat{\tau} - \tau_0\right).
\end{align}
To incorporate the \emph{finite sample variance} stemming from the privatization, we see from \eqref{eq:splitting} that, for the variance $V$, it must hold that: $V = \sigma_{\mathup{AIPW}}^2 + n r(\varepsilon, \delta, n)^2$.

To construct the CIs, we further make use of two important properties of DP: the (i)~\emph{post-processing} and the (ii)~\emph{sequential decomposition} property (see Supplement~\ref{sec:appendix_background} for details). The post-processing property states that differentially private estimates cannot be de-privatized by further processing of the estimate. The sequential decomposition property states that the privacy budgets $\varepsilon_i$ of multiple sequential estimation tasks $i$ on the same data add up when releasing a joint outcome. For our task, the property~(i) justifies that constructing the intervals from differentially private ATE and variance estimates does not require further privatization. Further, property~(ii) establishes that we must split the overall privacy budget into two smaller budgets, $\varepsilon_1$ and $\varepsilon_2$, attributed to ATE and variance estimation.

With the properties above and Steps \circled{1} and \circled{2}, we now state the main contribution of our work. 

\begin{theorem}[Valid CIs for the ATE under DP]
\label{thm:intervals}
Let $z:= (a,x,y)$ define a data sample following the joint distribution $\mathcal{Z}$, and let $\hat{\eta} = (\hat{\mu},\hat{\pi})$ be the estimated nuisance functions estimated at rates $n^{-\beta_{\mu}}$ and $n^{-\beta_{\pi}}$ with $\beta_{\mu} + \beta_{\pi} \geq \frac{1}{2}$. We define the privacy budget $(\varepsilon, \delta)$ and the sub-budgets $\varepsilon_1, \varepsilon_2, \delta_1, \delta_2$, for ATE and variance estimation under DP such that $\varepsilon_1 + \varepsilon_2 = \varepsilon$ and $\delta_1 + \delta_2 = \delta$.
Furthermore, let $D$ be the training dataset with $\vert D \vert$ = $n$, and let $\hat{\tau}_{\mathup{DP}}$ denote the $(\varepsilon_1, \delta_1)$-differentially private ATE estimate and $\hat{\sigma}_{\mathup{DP}}$ denote the $(\varepsilon_2, \delta_2)$-differentially private variance estimate. Then, the valid and $(\varepsilon, \delta)$-differentially private CI is given by
\vspace{-0.2cm}
\begin{align}
    \mathrm{CI}_{\mathup{DP}} := \Bigg[ \hat{\tau}^{\mathup{DP}} \pm \Phi^{-1}\Big(1-\frac{\alpha}{2}\Big) \sqrt{\frac{\hat{V}_{\mathup{DP}}}{n}}\Bigg],
\end{align}
where $\Phi^{-1}(1-\alpha/2)$ denotes the respective quantile of the standard normal distribution, $\gamma_{\tau}$ is the gross-error sensitivity of the ATE estimation, and
\begin{align}
    \hat{V}_{\mathup{DP}} =  \hat{\sigma}^2_{\mathup{DP}}+ \gamma_{\tau}^2 \cdot \frac{50\ln(n)}{n\varepsilon_1^{2}}\ln{\Big(\frac{2}{\delta_1}\Big)}.
\end{align}
\end{theorem}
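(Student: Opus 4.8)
The plan is to establish the two assertions of the statement separately: first that $\mathrm{CI}_{\mathup{DP}}$ satisfies $(\varepsilon,\delta)$-DP, and second that it attains the nominal asymptotic coverage $1-\alpha$.

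For the privacy claim I would argue purely by composition and post-processing. By Theorem~\ref{thm:ate_privatization} the released estimate $\hat{\tau}_{\mathup{DP}}$ is $(\varepsilon_1,\delta_1)$-DP, and by Theorem~\ref{thm:variance_privatization} the estimate $\hat{\sigma}^2_{\mathup{DP}}$ is $(\varepsilon_2,\delta_2)$-DP. Since both queries act on the same dataset $D$, the sequential decomposition property yields that the joint release of the pair $(\hat{\tau}_{\mathup{DP}}, \hat{\sigma}^2_{\mathup{DP}})$ is $(\varepsilon_1+\varepsilon_2,\delta_1+\delta_2)=(\varepsilon,\delta)$-DP. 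It then remains to observe that both endpoints of $\mathrm{CI}_{\mathup{DP}}$ are a fixed function of this pair: the augmented variance $\hat{V}_{\mathup{DP}}$ is obtained from $\hat{\sigma}^2_{\mathup{DP}}$ by adding the correction $\gamma_{\tau}^2\cdot\frac{50\ln(n)}{n\varepsilon_1^{2}}\ln(2/\delta_1)$, whose ingredients are the publicly fixed privacy parameters $n,\varepsilon_1,\delta_1$ and the worst-case sensitivity bound $\gamma_\tau$ already used to calibrate the Step~\circled{1} mechanism, while $\Phi^{-1}(1-\alpha/2)$ is a constant. Thus $\mathrm{CI}_{\mathup{DP}}$ is a post-processing of an $(\varepsilon,\delta)$-DP release, and the post-processing property closes the privacy argument.

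For validity I would build on the decomposition~\eqref{eq:splitting}, namely $\sqrt{n}(\hat{\tau}_{\mathup{DP}}-\tau_0)=\sqrt{n}B+\sqrt{n}(\hat{\tau}-\tau_0)$. The first key observation is that the privatization noise $B=\hat{\tau}_{\mathup{DP}}-\hat{\tau}$ is, conditionally on the data, exactly $\mathcal{N}(0, r(\varepsilon_1,\delta_1,n)^2)$ and is drawn independently of the sampling randomness, whereas the second term satisfies $\sqrt{n}(\hat{\tau}-\tau_0)\to_d\mathcal{N}(0,\sigma^2_{\mathup{AIPW}})$ by the asymptotic normality of the AIPW estimator. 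Since the two sources are independent, the studentizing variance is additive, and a direct computation shows $n\,r(\varepsilon_1,\delta_1,n)^2=\gamma_{\tau}^2\cdot\frac{50\ln(n)}{n\varepsilon_1^{2}}\ln(2/\delta_1)$, so that the correction term in $\hat{V}_{\mathup{DP}}$ is exactly the variance contributed by the privatization of $\hat{\tau}$. I would then verify that $\hat{\sigma}^2_{\mathup{DP}}$ is consistent for $\sigma^2_{\mathup{AIPW}}$: the sandwich estimator $\hat{\sigma}^2_{\mathup{AIPW}}$ is doubly robust and consistent, and the privatization noise from Step~\circled{2} vanishes at rate $\sqrt{\ln n}/n$, so the truncation $\max\{0,\cdot\}$ is eventually inactive and $\hat{\sigma}^2_{\mathup{DP}}\to_p\sigma^2_{\mathup{AIPW}}$. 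Combining these facts gives $\hat{V}_{\mathup{DP}}\to_p\sigma^2_{\mathup{AIPW}}$, and retaining the exact term $n\,r(\varepsilon_1,\delta_1,n)^2$ rather than its vanishing limit ensures the privatization variance is fully accounted for in finite samples, which is precisely what prevents the undercoverage that would arise from studentizing by $\hat{\sigma}^2_{\mathup{DP}}$ alone. An application of Slutsky's theorem then yields $\sqrt{n}(\hat{\tau}_{\mathup{DP}}-\tau_0)/\sqrt{\hat{V}_{\mathup{DP}}}\to_d\mathcal{N}(0,1)$, from which $P(\tau_0\in\mathrm{CI}_{\mathup{DP}})\to 1-\alpha$ follows.

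The main obstacle I anticipate is the validity argument rather than the privacy one. The delicate step is handling the two qualitatively different sources of randomness — the exactly Gaussian, $n$-dependent privatization noise $B$ and the only asymptotically Gaussian sampling error $\hat{\tau}-\tau_0$ — within a single Slutsky/central-limit argument, and in particular justifying that keeping the exact finite-sample correction $n\,r(\varepsilon_1,\delta_1,n)^2$ yields the desired conservativeness without breaking asymptotic validity (it vanishes in the limit, so the limiting variance remains $\sigma^2_{\mathup{AIPW}}$). Care is also required to show that the privatization noise and the $\max\{0,\cdot\}$ truncation in $\hat{\sigma}^2_{\mathup{DP}}$ do not disturb its consistency, so that the studentization remains valid; this is where the vanishing-noise rate and the eventual inactivity of the truncation must be made precise.
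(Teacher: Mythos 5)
Your proposal is correct and follows essentially the same route as the paper's proof: privacy via sequential composition of the $(\varepsilon_1,\delta_1)$- and $(\varepsilon_2,\delta_2)$-DP releases plus post-processing, and validity via the decomposition $\sqrt{n}(\hat{\tau}_{\mathup{DP}}-\tau_0)=\sqrt{n}B+\sqrt{n}(\hat{\tau}-\tau_0)$, the identification of the correction term with $n\,r(\varepsilon_1,\delta_1,n)^2$, consistency of $\hat{\sigma}^2_{\mathup{DP}}$ for $\sigma^2_{\mathup{AIPW}}$, and Slutsky. If anything, you spell out steps the paper leaves implicit (the exact computation of $n\,r^2$, the conditional Gaussianity and independence of $B$, and the eventual inactivity of the $\max\{0,\cdot\}$ truncation), which strengthens rather than departs from the published argument.
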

\vspace{-0.3cm}
\begin{proof}
    We prove Theorem~\ref{thm:intervals} in Supplement~\ref{sec:appendix_proofs}.
\end{proof}


\section{Experiments}
\label{sec:experiments}
\vspace{-0.2cm}

\textbf{Baselines:} There are \emph{\underline{no}} other methods for constructing CIs for the ATE under $(\varepsilon, \delta)$-DP. Therefore, we cannot compare our \framework against existing baselines, but instead, evaluate our framework in terms of empirical coverage (see below). We further report standard non-private CIs and CIs computed via a \emph{non-private} variance $\hat{\sigma}^2_{\mathrm{AIPW}}$ instead of $\hat{V}_{\mathup{DP}}$ together with $\hat{\tau}_{\mathrm{DP}}$. We refer to this as na{\"i}ve CIs in the following. We expect similar coverage for the non-private and our intervals, but undercoverage for the na{\"i}ve CIs. However, the standard and na\"ive CIs are \emph{not} private.

\vspace{-0.1cm}
\begin{table}[t]
\centering
\tiny
\setlength{\tabcolsep}{3pt}
\begin{tabular}{c p{0.8cm} c p{1.7cm} p{1.7cm} p{1.7cm}}
\toprule
& \multirow{2}{*}{Base} & \multirow{2}{*}{Conf.} & \multicolumn{3}{c}{Coverage} \\
\cmidrule(lr){4-6}
& learner & ($1-\alpha$) & Standard CIs & Na\"ive CIs & \framework (ours)\\
\midrule
\multirow{6}{*}{1}
& Kernel & 0.80 & \textcolor{ForestGreen}{0.792} $\pm$ 0.018 & \textcolor{BrickRed}{0.008} $\pm$ 0.004 & \textcolor{ForestGreen}{0.784} $\pm$ 0.018 \\
& NN     & 0.80 & \textcolor{BrickRed}{0.762} $\pm$ 0.019 & \textcolor{BrickRed}{0.024} $\pm$ 0.007 & \textcolor{ForestGreen}{0.824} $\pm$ 0.017 \\
\cmidrule(lr){2-6}
& Kernel & 0.90 & \textcolor{ForestGreen}{0.902} $\pm$ 0.013 & \textcolor{BrickRed}{0.036} $\pm$ 0.008 & \textcolor{ForestGreen}{0.912} $\pm$ 0.013 \\
& NN     & 0.90 & \textcolor{BrickRed}{0.868} $\pm$ 0.015 & \textcolor{BrickRed}{0.028} $\pm$ 0.007 & \textcolor{ForestGreen}{0.894} $\pm$ 0.014 \\
\cmidrule(lr){2-6}
& Kernel & 0.95 & \textcolor{ForestGreen}{0.960} $\pm$ 0.009 & \textcolor{BrickRed}{0.038} $\pm$ 0.009 & \textcolor{ForestGreen}{0.964} $\pm$ 0.008 \\
& NN     & 0.95 & \textcolor{ForestGreen}{0.950} $\pm$ 0.010 & \textcolor{BrickRed}{0.020} $\pm$ 0.006 & \textcolor{ForestGreen}{0.962} $\pm$ 0.009 \\
\midrule
\multirow{6}{*}{2} 
& Kernel & 0.80 & \textcolor{BrickRed}{0.770} $\pm$ 0.019 & \textcolor{BrickRed}{0.004} $\pm$ 0.001 & \textcolor{ForestGreen}{0.848} $\pm$ 0.016 \\
& NN     & 0.80 & \textcolor{ForestGreen}{0.800} $\pm$ 0.018 & \textcolor{BrickRed}{0.012} $\pm$ 0.005 & \textcolor{ForestGreen}{0.832} $\pm$ 0.017 \\
\cmidrule(lr){2-6}
& Kernel & 0.90 & \textcolor{ForestGreen}{0.904} $\pm$ 0.013 & \textcolor{BrickRed}{0.008} $\pm$ 0.004 & \textcolor{ForestGreen}{0.910} $\pm$ 0.013 \\
& NN     & 0.90 & \textcolor{ForestGreen}{0.912} $\pm$ 0.013 & \textcolor{BrickRed}{0.008} $\pm$ 0.004 & \textcolor{ForestGreen}{0.904} $\pm$ 0.013 \\
\cmidrule(lr){2-6}
& Kernel & 0.95 & \textcolor{ForestGreen}{0.950} $\pm$ 0.010 & \textcolor{BrickRed}{0.010} $\pm$ 0.004 & \textcolor{ForestGreen}{0.958} $\pm$ 0.009 \\
& NN     & 0.95 & \textcolor{ForestGreen}{0.952} $\pm$ 0.010 & \textcolor{BrickRed}{0.016} $\pm$ 0.006 & \textcolor{ForestGreen}{0.954} $\pm$ 0.009 \\
\bottomrule
\end{tabular}
\caption{\textbf{Empirical coverage.} Comparison of \framework against standard and na{\"i}ve CIs based on Kernel and NN as base learners over 500 runs for $\varepsilon = 0.5$ and $\delta = 10^{-5}$. We report the mean and standard deviation across the runs. To retain as much of the prediction power of $\hat{\tau}^{\text{DP}}$ as possible, we set $\varepsilon_1=0.9\varepsilon$, $\delta_1=0.9\delta$. In \textcolor{ForestGreen}{green}: valid CIs cover the desired confidence level $(1-\alpha$) and are thus faithful. $\Rightarrow$\,\emph{As expected, we observe similar coverage for the standard CIs and \framework. The na{\"i}ve CIs fail to fulfill any coverage guarantees, underlining the need for accounting for the finite sample variance from the privatization of $\hat{\tau}$.}}
\label{tab:coverage-analysis}
\vspace{-0.5cm}
\end{table}

\textbf{Implementation:} We evaluate two versions of \framework with different base learners for nuisance estimation: (i)~logistic regression for propensity estimation and kernel ridge regression for the outcome function (\textbf{Kernel}),(ii)~neural networks with $\tanh$ regularization 
for both nuisances (\textbf{NN}) optimized through stochastic gradient descent.

\textbf{Performance metrics: } We evaluate \framework based on the notion of \emph{faithfulness} \citep[e.g.,][]{Schroder.2024}, i.e., we compute the \emph{empirical coverage} of the CIs and assess whether it surpasses the $1-\alpha$ threshold.

\subsection{Evaluation on synthetic datasets}

Evaluating methods in causal ML is difficult, as counterfactual outcomes are never observed in real-world data. Therefore, it is standard to evaluate the methods on synthetic data \citep[e.g.,][]{Curth.2021}. We use two datasets adapted from \citet{Oprescu.2019}: $\bullet$\,\textbf{Dataset~1} consists of two covariates, both of which influence treatment assignment and outcomes. $\bullet$\,\textbf{Dataset~2} is a more complex dataset consisting of 25 covariates, of which a subset acts as confounders. For details, see Supplement~\ref{sec:appendix_experiments}.

\textbf{Results:} We show that our \framework gives CIs that are \emph{faithful}. The results for both datasets are reported in Table~\ref{tab:coverage-analysis} for confidence levels $(1-\alpha) \in \{0.8,0.9,0.95\}$. We also report the coverage of the na\"ive CIs with \emph{non-private} variance and the standard non-private CIs. Recall that our aim is not to benchmark the methods but rather to show that our framework yields faithful CIs. $\Rightarrow$ We observe that \framework achieves the desired coverage guarantees, which confirms that the CIs are faithful.

\textbf{Performance across varying privacy budgets:} We vary the privacy budget $\varepsilon$ in Figure~\ref{fig:facet}. We observe that \framework behaves as expected: for increasing privacy budgets, the intervals become tighter, eventually converging to the non-private estimate. We note that privacy inevitably induces a trade-off regarding the width of the interval, which we discuss in Supplement~\ref{sec:appendix_tradeoff}.

\textbf{Performance across varying sample sizes:} We further analyze the performance of \framework across different sample sizes $n$ (see Fig.~\ref{fig:facet}). The results are as expected and confirm our theoretical insights: For larger $n$, the CIs become tighter, converging to the asymptotic solution. Additionally, we observe that \framework is not sensitive to the choice of the base learner: both versions with (i)~kernel ridge regression (=Kernel) and  (ii)~neural network (=NN) lead to similar results. This again demonstrates the flexibility of our framework. 

\begin{figure}[h]
    \centering
    \includegraphics[width=\linewidth]{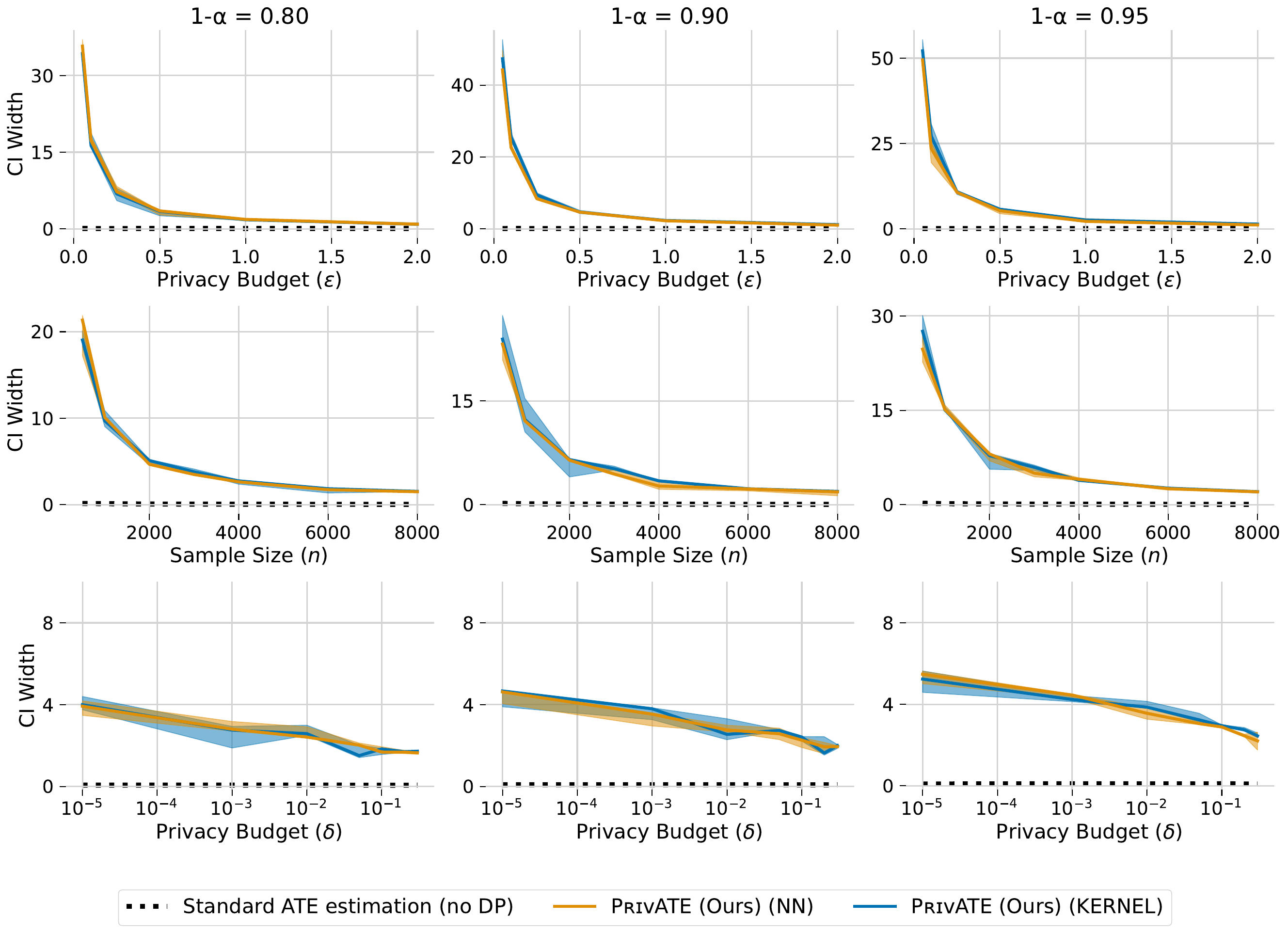}
    \caption{\textbf{Performance across different privacy budgets and sample sizes.} Results for Dataset~1 over 10 runs with base values $\varepsilon=0.5$, $\delta=10^{-5}$, $n=3000$. The standard CIs only incorporate the sampling uncertainty (privacy considerations). $\Rightarrow$\,\emph{The results confirm our theoretical intuition: with larger budget or large $n$, our CIs approach the standard CIs.}}
    \label{fig:facet}
    \vspace{-0.4cm}
\end{figure}

\subsection{Evaluation on medical datasets}

We demonstrate the real-world applicability of \framework in a medical context on the Right Heart Catheterization (RHC) dataset \citep{Connors.1996}. The dataset contains observations of 5735 patients admitted to intensive care units. We aim to predict the ATE of receiving a Swan-Ganz catheterization on the probability of death following hospitalization. Further details on the RHC dataset are in Supplement~\ref{sec:appendix_experiments}.

\begin{table}[h]
\centering
\tiny
\begin{tabular}{cc r rr}
\toprule
Conf. & Budget & \multicolumn{1}{c}{DP-ATE} & \multicolumn{2}{c}{Confidence intervals} \\
\cmidrule(lr){4-5}
$1-\alpha$ & $\varepsilon$ & \multicolumn{1}{c}{$\hat{\tau}_\mathup{DP}$} & \multicolumn{1}{c}{Valid na\"ive CIs} & \multicolumn{1}{c}{\framework (Ours)}\\
\midrule
\multirow{3}{*}{0.80} 
& $0.10$ & $0.2398$ & $[-1.1619, 1.6416]$ & $[-1.1620, 1.6417]$\\
& $0.25$ & $0.0796$ & $[-0.4871, 0.6462]$ & $[-0.4871, 0.6463]$\\
& $0.50$ & $0.0541$ & $[-0.2361, 0.3442]$ & $[-0.2361, 0.3442]$\\
\midrule
\multirow{3}{*}{0.90} 
& $0.10$ & $0.2397$ & $[-1.5597, 2.0391]$ & $[-1.5599, 2.0393]$\\
& $0.25$ & $0.1059$ & $[-0.5885, 0.8003]$ & $[-0.5887, 0.8005]$\\
& $0.50$ & $0.0534$ & $[-0.3143, 0.4211]$ & $[-0.3143, 0.4210]$\\
\midrule
\multirow{3}{*}{0.95} 
& $0.10$ & $0.2517$ & $[-1.8680, 2.3715]$ & $[-1.8686, 2.3719]$\\
& $0.25$ & $0.0965$ & $[-0.7888, 0.9819]$ & $[-0.7891, 0.9822]$\\
& $0.50$ & $0.0468$ & $[-0.3856, 0.4792]$ & $[-0.3858, 0.4793]$\\
\bottomrule
\end{tabular}
\caption{\textbf{Medical application.} Reported are (i)~the standard \emph{non-private} CIs and (ii)~our \framework CIs on the RHC dataset averaged over 5 runs of the kernel base learner. $\Rightarrow$ \emph{As desired, the \framework CIs mainly include the valid but non-private CIs, which demonstrates that \framework achieves the desired empirical coverage, suggesting valid CIs.}
}
\label{tab:real_world}
\vspace{-0.4cm}
\end{table}

\textbf{Results:}
In Table~\ref{tab:real_world}, we present the CIs returned by \framework compared to an adapted version of the na\"ive CIs constructed with a \emph{valid variance}, which includes the correction term for the finite-sample variance stemming from ATE privatization. Again, we do not aim to benchmark the methods as the na\"ive CIs are non-private. Rather, we aim to understand whether the CIs from \framework are faithful. We observe that the \framework CIs include the valid, na\"ive CIs, which indicates faithfulness. In other words, our \framework gives CIs that remain informative while fulfilling $(\varepsilon, \delta)$-DP. 

\emph{Medical interpretation:} We observe that we always receive a positive private treatment effect $\hat{\tau}_\mathup{DP}$, indicating an average increase in the probability of death after receiving a Swan-Ganz catheter. The negative effect on survival is likely to be attributed to the life-threatening complications and risks of introducing the catheter, such as pulmonary artery rupture \citep[e.g.,][]{Kearney.1995}. The positive $\hat{\tau}_\mathup{DP}$ also coincides with the non-private ATE estimate $\hat{\tau} = 1.17$ and is in line with findings in the literature \citep[e.g.,][]{Connors.1996, Sandham.2003}.

\textbf{Extension to interventional RCT data:}
Our \framework framework applies to both observational and experimental data. In experimental randomized controlled trials (RCTs), the propensity score is known and thus does not need to be estimated. Note that in some cases, however, estimating the propensity score, even in RCTs, can be beneficial \citep{Su.2023}. So far, we have evaluated \framework on observational data. To show the applicability of \framework to RCTs, we conduct additional experiments in Supplement~\ref{sec:appendix_rct_extension}.


\section{Discussion}
\label{sec:discussion}

\textbf{Impact \& limitations:}
Our work has a direct impact on many practical applications, especially in safety-critical settings. Here, uncertainty quantification is necessary to make reliable and safe inferences. Yet, many applications are subject to strict privacy laws. We propose \framework to construct CIs for the ATE under DP. However, we recommend cautious use in practice as both ATE estimation and privacy mechanisms rely upon mathematical assumptions that need to be fulfilled. 

\textbf{Conclusion:}
We proposed a new, general framework for constructing differentially private CIs for the ATE. Our \framework framework is carefully designed to capture the additional uncertainty from the privatization and, therefore, return valid CIs. Our \framework framework is flexible: one can use \framework with any machine learning model for learning the propensity and the outcome function.

\newpage
\section*{Acknowledgements}
This work has been supported by the German Federal Ministry of Education and Research (Grant: 01IS24082).

\bibliography{bibliography}


\makeatletter
\makeatother

\newcommand*\circledgreen[1]{%
\tikz[baseline=(char.base)]{
  \node[shape=circle, draw=ForestGreen!60, fill=ForestGreen!10, thick, inner sep=1pt] (char) {\scriptsize\textsf{#1}};
}}
\newcommand*\circledred[1]{%
\tikz[baseline=(char.base)]{
  \node[shape=circle, draw=BrickRed!60, fill=BrickRed!10, thick, inner sep=1pt] (char) {\scriptsize\textsf{#1}};
}}

\newcommand{\greencheck}{\textcolor{ForestGreen}{\checkmark}}
\newcommand{\redcross}{\textcolor{BrickRed}{\ding{55}}}

\makeatletter
\def\maketag@@@#1{\hbox{\m@th\normalfont\normalsize#1}}
\makeatother


\myexternaldocument{neurips_2025}

\appendix
\onecolumn



\section{Background}
\label{sec:appendix_background}

\subsection{Estimating the average treatment effect}
\label{sec:appendix_ATE_estimation}

Under the three standard assumptions of causal inference, i.e., positivity, consistency, and unconfoundedness \citep[e.g.,][]{Curth.2021, Rubin.2005}, the ATE is point-identified as 
\begin{align}
    \tau = \mathbb{E}[Y | A=1] - \mathbb{E}[Y | A=0] = \mathbb{E}[\mu(X,1) - \mu(X,0)].
\end{align}
However, estimating $\hat{\tau}$ via the empirical average of the stated difference is only unbiased if the treatment assignment is randomized and if it does not depend on $X$ (i.e., if for a constant $\pi(x)$). To account for a non-random treatment assignment in observational data, the \emph{inverse propensity weighted} (IPW) estimator constructs $\hat{\tau}$ by estimating $\hat{\pi}$ in the first step and then employing the estimates to take the propensity weighted empirical average of the outcomes \citep{Rosenbaum.1983}. 

Yet, misspecification of $\hat{\pi}$ or $\hat{\mu}$ can lead to biased ATE estimates. As a remedy, the \emph{augmented inverse propensity weighted} (AIPW) estimator \citep{Robins.1994} employs an orthogonal loss to fulfill the \emph{double robustness} property. Doubly robust estimators are consistent if \emph{either} $\hat{\mu}$ \emph{or} $\hat{\pi}$ are correctly specified. The AIPW estimator is then given by
\begin{align}
    \hat{\tau} = \frac{1}{n}\sum_{i=1}^n \Bigg[\hat{\mu}(X,1) - \hat{\mu}(X,0)
     + \frac{Y_i - \hat{\mu}(X,1)}{\hat{\pi}(X_i)}A_i - \frac{Y_i - \hat{\mu}(X,0)}{1 - \hat{\pi}(X_i)} (1- A_i) \Bigg].
\end{align}
Hence, in contrast to the IPW estimator, the AIPW estimator is an unbiased estimator of ATE and is asymptotically normal distributed \citep{Wager.2024}. This is important as we (i)~aim to consistently estimate the ATE, and (ii)~can derive valid CIs due to the asymptotic normality to the estimator. In our work, we thus aim to derive differentially private CIs for the ATE estimated through the AIPW estimator.

\subsection{Differential privacy}
\label{sec:appendix_DP}

In our work, we employ \emph{output perturbation} based on the Gaussian mechanism. Of note, output perturbation is especially suitable for our task of constructing CIs for the estimated ATE due to our three reasons: output perturbation (i)~retains the adherence to the causal assumptions, (ii)~is model agnostic, and (iii)~preserves the ability of the AIPW estimator to address the fundamental problem of causal inference \citep{Schroder.2025}. In contrast, other approaches (i.e., input, objective, and gradient perturbations) fail to fulfill at least one of the points (i)--(iii) and would thus be \underline{not} suitable for estimating the ATE with DP guarantees in a model-agnostic way.

Our work employs two important properties of differentially private algorithms, which we state below.

\begin{lemma}[Sequential composition property \citep{Dwork.2016}]
\label{lem:composition}
    Let the mechanism $M_j$, $j=1,\ldots, k$ satisfy $(\varepsilon_j, \delta_j)$-DP. Then, applying the sequence of the mechanisms $M_j$ on the same data yields $(\sum_{j=1}^{k}\varepsilon_j, \sum_{j=1}^{k}\delta_j)$-DP guarantees.
\end{lemma}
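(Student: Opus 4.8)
The plan is to prove the statement by induction on the number of mechanisms $k$, with the two-mechanism case $k=2$ carrying all of the content. For the inductive step I would group the first $k-1$ mechanisms into a single composite mechanism $M_{1:k-1}(D) := (M_1(D), \ldots, M_{k-1}(D))$, which by the induction hypothesis is $\big(\sum_{j=1}^{k-1}\varepsilon_j, \sum_{j=1}^{k-1}\delta_j\big)$-DP, and then compose it with $M_k$ using the $k=2$ result. This reduces everything to the core claim: the joint release of an $(\varepsilon_1,\delta_1)$-DP mechanism and an $(\varepsilon_2,\delta_2)$-DP mechanism evaluated on the same data is $(\varepsilon_1+\varepsilon_2, \delta_1+\delta_2)$-DP.

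For the $k=2$ case I would use the standard bad-event decomposition of approximate differential privacy. Fix neighbors $D \sim D'$ and let $p_j^D, p_j^{D'}$ denote the output densities (Radon--Nikodym derivatives with respect to a common dominating measure) of $M_j$ on $D$ and $D'$. The key lemma is that $(\varepsilon_j,\delta_j)$-DP lets me carve out a bad set $B_j$ in the range $\mathcal{R}_j$ of $M_j$ with $\Pr[M_j(D)\in B_j] \le \delta_j$, outside of which the privacy loss is uniformly bounded, i.e.\ $p_j^D(r)/p_j^{D'}(r) \le e^{\varepsilon_j}$ for $r \notin B_j$. Because the two mechanisms draw independent internal randomness, conditional on the data the joint density factorizes as $p^D(r_1,r_2) = p_1^D(r_1)\,p_2^D(r_2)$, so on the good region $\{r_1\notin B_1\}\cap\{r_2\notin B_2\}$ the joint privacy loss multiplies, $p^D(r_1,r_2)/p^{D'}(r_1,r_2) \le e^{\varepsilon_1}e^{\varepsilon_2} = e^{\varepsilon_1+\varepsilon_2}$.

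I would then define the joint bad set $B := (B_1\times\mathcal{R}_2)\cup(\mathcal{R}_1\times B_2)$, whose mass under $M(D)$ is at most $\delta_1+\delta_2$ by a union bound, and conclude for any measurable $S$ in the product output space by splitting on $B$:
\begin{align*}
\Pr[M(D)\in S] &= \Pr[M(D)\in S\setminus B] + \Pr[M(D)\in S\cap B]\\
&\le e^{\varepsilon_1+\varepsilon_2}\,\Pr[M(D')\in S\setminus B] + (\delta_1+\delta_2)\\
&\le e^{\varepsilon_1+\varepsilon_2}\,\Pr[M(D')\in S] + (\delta_1+\delta_2),
\end{align*}
which is exactly the $(\varepsilon_1+\varepsilon_2,\delta_1+\delta_2)$-DP guarantee. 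The main obstacle is establishing the bad-event decomposition cleanly at the level of measures rather than densities, so that the argument does not tacitly assume the outputs are discrete or admit densities; I would handle this by phrasing the per-mechanism guarantee through the $\delta$-approximate max-divergence (hockey-stick) characterization of $(\varepsilon,\delta)$-DP and verifying the factorization there. A secondary point is that, if one wants the adaptive version in which $M_2$ may depend on the realized output of $M_1$, the same bound follows by first conditioning on $M_1$'s output and applying the guarantee of $M_2$ pointwise; for the use in this paper, however, the non-adaptive joint release of the ATE and variance estimates on the same data already suffices.
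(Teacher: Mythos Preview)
The paper does not actually prove this lemma: it is stated in the appendix as a background result and attributed to \citet{Dwork.2016}, with no argument given. So there is no ``paper's own proof'' to compare against here.

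That said, your proposal is correct and is essentially the standard textbook argument for basic (non-adaptive) composition of $(\varepsilon,\delta)$-DP mechanisms. The induction-to-$k=2$ reduction is clean, and the bad-event decomposition (carve out a set of mass $\le\delta_j$ per mechanism outside of which the pointwise likelihood ratio is bounded by $e^{\varepsilon_j}$, then union-bound the bad sets and multiply the good-set ratios) is exactly how the result is usually established. Your remarks about handling the measure-theoretic subtleties via the hockey-stick divergence and about the adaptive extension are accurate but, as you note, not needed for the paper's application, which only composes the ATE and variance releases non-adaptively.
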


\begin{lemma}[Post-processing property \citep{Dwork.2006}]
\label{lem:post-processing}
    If a mechanism $M$ satisfies $(\varepsilon, \delta)$-DP, then, for any function $f$, it holds that $f(M)$ also satisfies $(\varepsilon, \delta)$-DP.
\end{lemma}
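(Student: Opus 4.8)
The statement bundles two claims: that $\mathrm{CI}_{\mathup{DP}}$ satisfies $(\varepsilon,\delta)$-DP, and that it is asymptotically valid. The plan is to dispatch them separately, handling privacy by composition and post-processing, and validity by the variance decomposition in \eqref{eq:splitting} combined with Slutsky's theorem. For privacy, by Theorem~\ref{thm:ate_privatization} the estimate $\hat{\tau}_{\mathup{DP}}$ is $(\varepsilon_1,\delta_1)$-DP and by Theorem~\ref{thm:variance_privatization} the estimate $\hat{\sigma}^2_{\mathup{DP}}$ is $(\varepsilon_2,\delta_2)$-DP. Both are computed on the same dataset $D$, so the sequential composition property (Lemma~\ref{lem:composition}) makes the joint release $(\hat{\tau}_{\mathup{DP}}, \hat{\sigma}^2_{\mathup{DP}})$ satisfy $(\varepsilon_1+\varepsilon_2, \delta_1+\delta_2) = (\varepsilon,\delta)$-DP. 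It then remains to note that $\mathrm{CI}_{\mathup{DP}}$ is a deterministic map of these two outputs: the correction term $\gamma_{\tau}^2 \cdot \frac{50\ln(n)}{n\varepsilon_1^{2}}\ln(2/\delta_1)$ depends only on the public constants $\gamma_\tau$, $n$, $\varepsilon_1$, $\delta_1$ (the gross-error sensitivity $\gamma_\tau$ is a supremum over the domain $\mathcal{Z}$, hence a distributional quantity rather than a function of the realized sample), and $\Phi^{-1}(1-\alpha/2)$ is a fixed critical value. The post-processing property (Lemma~\ref{lem:post-processing}) therefore preserves the guarantee, establishing $(\varepsilon,\delta)$-DP of the interval.

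\textbf{Validity.} For the coverage claim I would start from the decomposition \eqref{eq:splitting},
\[
\sqrt{n}\left(\hat{\tau}_{\mathup{DP}} - \tau_0\right) = \sqrt{n}B + \sqrt{n}\left(\hat{\tau} - \tau_0\right),
\]
where $B = \hat{\tau}_{\mathup{DP}} - \hat{\tau}$ is, conditional on $\hat{\tau}$, the mean-zero Gaussian privatization noise with variance $r(\varepsilon_1,\delta_1,n)^2$, and is independent of the sample because the mechanism of Theorem~\ref{thm:ate_privatization} injects $U \sim \mathcal{N}(0,1)$ drawn independently of $D$. The second term is $\sqrt{n}$-asymptotically normal with variance $\sigma^2_{\mathup{AIPW}}$ by the standard AIPW central limit theorem, which holds under the boundedness and product-rate conditions $\beta_\mu + \beta_\pi \ge \tfrac12$ via Neyman-orthogonality (Lemma~\ref{lem:DR-IF}). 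Since $\mathrm{Var}(\sqrt{n}B) = n\,r(\varepsilon_1,\delta_1,n)^2 = \gamma_\tau^2 \cdot \frac{50\ln(n)\ln(2/\delta_1)}{\varepsilon_1^2\, n} \to 0$, the privatization term is $o_P(1)$, and by independence together with Slutsky's theorem we obtain $\sqrt{n}(\hat{\tau}_{\mathup{DP}} - \tau_0) \rightarrow \mathcal{N}(0, \sigma^2_{\mathup{AIPW}})$.

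\textbf{Variance consistency and conclusion.} Next I would show $\hat{V}_{\mathup{DP}} \rightarrow_{P} \sigma^2_{\mathup{AIPW}}$. The doubly robust sandwich estimator $\hat{\sigma}^2_{\mathup{AIPW}}$ is consistent for $\sigma^2_{\mathup{AIPW}}$; its privatized version $\hat{\sigma}^2_{\mathup{DP}}$ differs from it by mean-zero Gaussian noise whose scale $\propto \tfrac{\sqrt{\ln n}}{n}$ vanishes (Theorem~\ref{thm:variance_privatization}, using that the gross-error sensitivity of $\mathrm{IF}^{\sigma}$ is finite under the boundedness assumptions), and the truncation $\max\{0,\cdot\}$ becomes asymptotically inactive once the argument concentrates around the strictly positive limit. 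Combined with the deterministic correction $n\,r(\varepsilon_1,\delta_1,n)^2 \to 0$, this gives $\hat{V}_{\mathup{DP}} = \hat{\sigma}^2_{\mathup{DP}} + n\,r(\varepsilon_1,\delta_1,n)^2 \rightarrow_{P} \sigma^2_{\mathup{AIPW}}$. A final application of Slutsky's theorem to the studentized statistic $\sqrt{n}(\hat{\tau}_{\mathup{DP}} - \tau_0)/\sqrt{\hat{V}_{\mathup{DP}}}$ produces a standard-normal limit, so $P(\tau_0 \in \mathrm{CI}_{\mathup{DP}}) \to 1-\alpha$, which is exactly asymptotic validity. The nonnegativity of the correction term additionally yields $\hat{V}_{\mathup{DP}} \ge \hat{\sigma}^2_{\mathup{DP}}$, so in finite samples the interval is inflated by precisely the privatization variance $n\,r^2$ identified in \eqref{eq:splitting}, which is what prevents the undercoverage of the na\"ive construction.

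\textbf{Main obstacle.} The delicate part is the joint control of the two vanishing noise sources. I must verify that the privatization scale $r$ decays fast enough that $n\,r^2 \to 0$ so it destroys neither the $\sqrt{n}$-CLT for $\hat{\tau}_{\mathup{DP}}$ nor the consistency of $\hat{V}_{\mathup{DP}}$, while simultaneously keeping $\gamma_\tau$ and the gross-error sensitivity of $\mathrm{IF}^{\sigma}$ finite so that the mechanisms and the correction term are well defined. The independence of the injected Gaussian noise from the data is the structural fact that makes the variance additive in \eqref{eq:splitting} and licenses the clean two-stage Slutsky argument; without it the limiting law would not factor into the sampling and privatization components.
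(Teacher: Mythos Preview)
Your proposal proves the wrong statement. The lemma in question is the \emph{post-processing property}: if a mechanism $M$ is $(\varepsilon,\delta)$-DP, then $f(M)$ is $(\varepsilon,\delta)$-DP for any (possibly randomized) function $f$. This is a foundational result cited from \citet{Dwork.2006}; the paper states it as background in the appendix and does not supply its own proof. What you have written is instead a proof sketch of Theorem~\ref{thm:intervals} (the validity and privacy of $\mathrm{CI}_{\mathup{DP}}$), which \emph{uses} Lemma~\ref{lem:post-processing} as an ingredient but is not the lemma itself.

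A correct proof of the post-processing property is short and has nothing to do with AIPW, influence functions, or Slutsky: for any measurable $S$ and neighboring $D\sim D'$, one writes $P(f(M_D)\in S) = P(M_D \in f^{-1}(S)) \le e^{\varepsilon} P(M_{D'} \in f^{-1}(S)) + \delta = e^{\varepsilon} P(f(M_{D'}) \in S) + \delta$, where the inequality is just the DP guarantee for $M$ applied to the event $f^{-1}(S)$. For randomized $f$ one conditions on the auxiliary randomness first. That is the entire argument; your proposal does not contain it.
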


\subsection{Extended related work}
\label{sec:appendix_related_work}

\textbf{Uncertainty quantification for causal quantities:} Many different approaches for quantifying uncertainty in causal estimates exist in the literature. Many works rely on Bayesian methods \citep[e.g.,][]{alaa.2017, Hess.2024, jesson.2020, Horii.2024}. However, Bayesian methods require prior distributions informed by domain knowledge. Therefore, they are less robust to model misspecification and, therefore, not suitable for our model-agnostic framework. A different stream focuses on estimating the conditional distribution of the treatment effect, which then can be employed for bootstrapping to construct uncertainty intervals \citep{Ma.2024, Melnychuk.2023, Melnychuk.2024}. Another stream of literature provides finite-sample uncertainty guarantees based on conformal prediction methods \citep[e.g.,][]{lei.2021, Schroder.2024}. These methods can only construct intervals for the potential outcomes, not the treatment effects.
Although one could construct from conformal prediction intervals for treatment effects based on the potential outcomes, those intervals commonly tend to be very wide and, therefore, less informative in practice. Importantly, \underline{none} of the works above can provide $(\varepsilon, \delta)$-\emph{differentially private} confidence intervals, which is the novelty of our work.

\subsection{Doubly robust variance estimation}

Standard variance estimators for the AIPW estimator are, in general, only consistent if \emph{both} nuisance functions are correctly specified \citep{Gruber.2012}. Thus, these estimators are not doubly robust and, therefore, can easily lead to \emph{undercoverage} of the CIs in practice, a highly undesirable scenario in safety-critical applications. We thus aim to estimate $\hat{\sigma}_{\mathup{AIPW}}^2$ through a \emph{doubly robust} variance estimator.

A common such estimator is built upon nonparametric bootstrapping \citep{Funk.2011, ShookSa.2024}. However, bootstrapping requires many queries to the training data, which is highly disadvantageous for estimating the variance with DP guarantees. Further, bootstrapping is computationally expensive. This is unlike doubly robust variance estimators, which have been proposed only for specific applications, namely, survey data with missing data imputation \citep[e.g.,][]{Haziza.2012, Kim.2014}. 

In our work, we estimate $\hat{\sigma}_{\mathup{AIPW}}^2$ via the \emph{empirical sandwich estimator} \citep{Huber.1967}. This has the following benefits: (i)~the empirical sandwich estimator is doubly robust, and  (ii)~it consistently estimates the variance of the AIPW estimator \citep{ShookSa.2024}. 

Let the estimator $\hat{\theta}$ for $\theta_0$ on data $\{z_i \mid i=1,\ldots,n\}$ be the solution to 
\begin{align}
    \frac{1}{n}\sum_{i=1}^n \psi(z_i, \hat{\theta}) = 0.
\end{align}
Define $\mathbf{B}(\theta) := \mathbb{E}[-\frac{\partial}{\partial \theta}\psi(Z_i,\theta)]$ with $\mathbf{B}_n(\hat{\theta}) :=  \frac{1}{n}\sum_{i=1}^n[-\frac{\partial}{\partial \hat{\theta}}\psi(Z_i, \hat{\theta})]$ and $\mathbf{M}(\theta) := \mathbb{E}[\psi(Z_i, \theta)\psi(Z_i,\theta)^T]$ with $\mathbf{M}_n(\hat{\theta}) :=  \frac{1}{n}\sum_{i=1}^n[\psi(Z_i, \hat{\theta})\psi(Z_i, \hat{\theta})^T]$. Then, the variance of $\hat{\theta}$ can be estimated by the empirical sandwich variance estimator $\mathbf{V}_n(\hat{\theta})$ as
\begin{align}
    \mathbf{V}_n(\hat{\theta}) = \mathbf{B}_n(\hat{\theta})^{-1}\mathbf{M}_n(\hat{\theta})(\mathbf{B}_n(\hat{\theta})^{-1})^T.
\end{align}
For estimating the variance of the AIPW estimator, it suffices to assess the estimating equation $\psi_{\tau}(z_i, \hat{\tau}) = \Gamma_{\hat{\eta}}(z_i) - \hat{\tau}$. It is easy to see that $\mathbf{B}_n(\hat{\tau})^{-1} = -1$ and $\mathbf{M}_n(\hat{\tau})^{-1} = (\Gamma_{\hat{\eta}}(z_i) - \hat{\tau})^2$, yielding the estimator 
\begin{align}
    \mathbf{V}_n(\hat{\tau}) =  \frac{1}{n}\sum_{i=1}^n (\Gamma_{\hat{\eta}}(z_i) - \hat{\tau})^2 =  \frac{1}{n}\sum_{i=1}^n \mathrm{IF}^{\mathup{AIPW}}(z_i,\tau_0;P)^2
\end{align}
stated in Section~\ref{sec:variance_privatization}.


\newpage
\section{Proofs}
\label{sec:appendix_proofs}

\subsection{Proofs of the supporting lemmas}

\subsubsection{Proof of Lemma~\ref{lem:DR-IF}}
\textbf{Lemma 4.2.}
We make the following assumptions on the nuisance functions: The nuisance functions are (i)~bounded, (ii)~estimated at rates $n^{-\beta_{\mu}}$ and $n^{-\beta_{\pi}}$ with $\beta_{\mu} + \beta_{\pi} \geq \frac{1}{2}$, and (iii)~in a local neighborhood of the true nuisance functions, i.e., there exists $\lambda_n$ decreasing in the sample size $n$, s.t. $\rVert \hat{\eta}-\eta_0\lVert{\infty} \leq \lambda_n$. Then, the IF of the AIPW learner is dominated by the influence function of the final mean estimation in the second step. The influence stemming from training the nuisance estimators $\hat{\pi}$ and $\hat{\mu}$ is negligible in the privatization step.
\begin{proof}
    Our proof makes use of results for two-stage influence functions \citep{Zhelonkin.2012}. 

     Let a two-stage estimator with first- and second-stage score functions $\psi_1$ and $\psi_2$ and a function $h$ continuously piecewise differentiable in the second variable be defined as solving the equations
    \begin{align}
        \mathbb{E}_F[\psi_1(z^{(1)}, S(P))] = 0
        \qquad \text{and} \qquad
        \mathbb{E}_F[\psi_2(z^{(2)}, h(z^{(1)},S(P)), T(P))] = 0,
    \end{align}
    where $z_i = (z_i^{(1)}, z_i^{(2)})$ represents the data employed for first and second stage estimation, respectively, where $P$ denotes the distribution function of $z_i$, and where $S$ and $T$ denote the functional of the first and second stage estimation, respectively.
    Then, the influence function of the complete two-stage estimator is given by
    \begin{align} 
        \mathrm{IF}(z, T, P) = M^{-1}\bigg(&\psi_2(z^{(2)}, h(z^{(1)}, S(P)), T(P))  \\&+ \int \frac{\partial}{\partial \theta} \psi_2(\tilde{z}^{(2)},\theta, T(P))  \frac{\partial}{\partial \nu}h(\tilde{z}^{(1)}, \nu) \diff F (\tilde{z}) \cdot \mathrm{IF}(z, S, P) \bigg)
    \end{align}
    where $M = - \int \frac{\partial}{\partial\xi} \psi_2(\tilde{z}^{(2)}, h(\tilde{z}^{(1)}, S(P)), \xi) \diff F(\tilde{z})$ and $\mathrm{IF}(z, S, P)$ where denotes the IF of the first-stage estimation \citep{Zhelonkin.2012}. 

    For the AIPW estimator, we have $h(\cdot) = \Gamma_{\hat{\eta}}(z)$, $T(P) = \tau$, and, with a slight abuse of notation, $S(P) = (\mu, \pi)$. We aim to show that (i) $M=1$ and (ii) $\int \frac{\partial}{\partial \theta} \psi_2(\tilde{z}^{(2)},\theta, T(P)) \frac{\partial}{\partial \nu}h(\tilde{z}^{(1)}, \nu) \diff F (\tilde{z}) = 0$.

    To show (i), we observe that $\psi_2 = \frac{1}{n}\sum_{i=1}^n \Gamma_{\hat{\eta}}(z_i) - \tau$. Therefore, we directly get the desired equality $M=1$. To show (ii), we observe that, for the AIPW estimator, the integral equals $\mathbb{E}[\frac{\partial}{\partial \nu}h(\tilde{z}^{(1)}, \nu)] = \mathbb{E}[\frac{\partial}{\partial \eta} \Gamma_{\hat{\eta}}(z)]$, where $\eta = (\hat{\mu}, \hat{\pi})$. 
    Now, we observe that, due to the Neyman-orthogonality of the AIPW estimator \citep[e.g.,][]{Wager.2024} and the stability of the nuisance estimators, we have $\mathbb{E}[\frac{\partial}{\partial \eta} \Gamma_{\eta_0}(z)] = 0$.
    By Taylor expansion of $\Gamma_{\hat{\eta}}$ around $\Gamma_{\eta_0}$ it follows
    \begin{align}
        \Gamma_{\hat{\eta}}(z) = \Gamma_{\eta_0}(z) + \frac{\partial}{\partial \eta} \Gamma_{\eta_0}(z)(\hat{\eta}-\eta_0) + R_2,
    \end{align}
    where $R_2$ includes higher order terms, and thus 
    \begin{align}
        \mathbb{E}[\frac{\partial}{\partial \eta} \Gamma_{\hat{\eta}}(z)] = \mathbb{E}[\frac{\partial}{\partial \eta}R_2].
    \end{align}
    Recall that by the local neighborhood assumption there exists $\lambda_n$ small and decreasing in $n$, such that $\rVert \hat{\eta}-\eta_0\lVert{\infty} \leq \lambda_n$, and thus as well $R_2 << \lambda_n$. Note that $R_2$ depends on the Hessian matrix of $\Gamma_{\eta}$ with respect to the estimated nuisances. If the propensity is known, the Hessian vanishes by straightforward computation, and thus $\mathbb{E}[\frac{\partial}{\partial \eta} \Gamma_{\hat{\eta}}(z)] = 0$. If the propensity is unknown, we make use of the local neighborhood property in \citep{Chernozhukov.2018}, resulting in $\mathbb{E}[\frac{\partial}{\partial \eta} \Gamma_{\hat{\eta}}(z)] = \mathbb{E}[\frac{\partial}{\partial \eta}R_2]=0$. Overall, the IF of the AIPW estimator is thus dominated by the influence function of the final mean estimation in the second step.
\end{proof}

\subsubsection{Proof of Lemma~\ref{lem:IF_variance}}
\textbf{Lemma 4.4.}
The influence function of $\hat{\sigma}_{\mathup{AIPW}}^2$ is given by
\begin{align}
    \mathrm{IF}^{\sigma}(z, \sigma_0, P) = (\Gamma_{\hat{\eta}}(z) - \hat{\tau})^2 - \hat{\sigma}_{\mathup{AIPW}}^2.
\end{align}
\begin{proof}
    Recall that we estimate $\hat{\sigma}_{\mathup{AIPW}}^2$ by 
    \begin{align}
        \hat{\sigma}_{\mathup{AIPW}}^2 = \frac{1}{n}\sum_{i=1}^n  \mathrm{IF}^{\mathup{AIPW}}(z_i,\tau_0;P)^2
    \end{align}
    with $\mathrm{IF}^{\mathup{AIPW}}(z,\tau_0;P) = \Gamma_{\hat{\eta}}(z) - \hat{\tau}$. Therefore, $\hat{\sigma}_{\mathup{AIPW}}^2$ is estimated as the mean of the variable $(\Gamma_{\hat{\eta}}(z_i) - \hat{\tau})^2$. The result then directly follows with the IF of the mean.
\end{proof}

\subsection{Proofs of the main theorems}

\subsubsection{Proof of Theorem~\ref{thm:ate_privatization}}
\textbf{Theorem 4.3.}
Let $z:= (a,x,y)$ define a data sample following the joint distribution $\mathcal{Z}$ and $\hat{\eta} = (\hat{\mu},\hat{\pi})$ the estimated nuisance functions. Furthermore, let $D$ be the training dataset with $\vert D \vert$ = $n$. Define
\begin{align}
    \hat{\tau}_\mathup{DP} := \hat{\tau} + \sup_{z \in \mathcal{Z}} \big \lVert
    \Gamma_{\hat{\eta}}(z) - \hat{\tau} \big \rVert \cdot \frac{5\sqrt{2\ln(n)\ln{(2/\delta)}}}{\varepsilon n} \cdot U,
 \end{align}
where $U \sim \mathcal{N}(0, 1)$. Then, $\hat{\tau}_\mathup{DP}$ is $(\varepsilon, \delta)$-differentially private. 
\begin{proof}
    First, recall that the influence function of the AIPW estimator is given by $ \mathrm{IF}^{\mathup{AIPW}}(z,\tau_0;P) = \Gamma_{\hat{\eta}}(z) - \hat{\tau}$. Therefore, $\gamma_{\tau} := \sup_{z \in \mathcal{Z}} \big \lVert
    \Gamma_{\hat{\eta}}(z) - \hat{\tau} \big \rVert$ denotes the gross-error sensitivity of the AIPW estimator.

    Our proof builds upon the idea of upper-bounding the global sensitivity $\Delta_2(f)$ required for privatization with that Gaussian mechanism (Definition~\ref{def:gaussian_mechanism}) by the so-called \emph{smooth-sensitivity} \citep{Nissim.2007}. Specifically, for the $\xi$-smooth sensitivity $\mathit{SS}_{\xi}(f, D)$ of the estimator $f$ with $\xi = \frac{\varepsilon}{4(d+2\log(2/\delta))}$, it holds that
    \begin{align}
        f^{\mathup{DP}}_D = f_D + \frac{5\sqrt{2\log(2/\delta)}}{\varepsilon} \mathit{SS}_{\xi}(f, D)\cdot U, 
    \end{align}
    where $U \sim \mathcal{N}(0, 1)$ is $(\varepsilon, \delta)$-differential private with
    \begin{align}
        \mathit{SS}_{\xi}(f, D):= \sup_{D^{'}}\{ \exp(-\xi d_{\mathrm{H}}(D, D^{'})) LS(f, D^{'}) \mid  D^{'} \in \mathcal{Z}^n\},
    \end{align}
    where $\mathcal{Z}^n$ denotes the data domain and $LS(\cdot)$ is the local sensitivity given by
    \begin{align}
        \mathit{LS}(f, D):= \sup_{D^{'}} \left\{ \lVert f_D - f_{D^{'}}\rVert \;\Big|\; d_{\mathrm{H}}(D, D^{'})=1 \right\}
    \end{align}
    \citep[see][]{Nissim.2007}. However, similar to the global sensitivity, the smooth sensitivity is extremely expensive or even infeasible to calculate, depending on the machine learning model employed for nuisance estimation. Therefore, we follow \citet{AvellaMedina.2021} and upper bound the smooth sensitivity by the appropriately scaled gross-error sensitivity of the AIPW estimator.

    Observe that the AIPW estimate $\hat{\tau}$ solves the equation 
    \begin{align}
        \sum_{i=1}^n \mathrm{IF}^{\mathup{AIPW}}(z_i,\tau_0;P) = 0.
    \end{align}
    The influence function is differentiable with respect to $\hat{\tau}$ with derivative $\frac{\partial}{\partial \hat{\tau}} \mathrm{IF}^{\mathup{AIPW}}(z, \tau_0;P) = 1$. Furthermore, since we assume the data to stem from a bounded domain, there exists a constant $K \in \mathbb{R}$ such that, for the gross-error sensitivity of $\hat{\tau}$, one has $\gamma_{\tau} \leq K$. 
    As a result, we can upper bound the smooth sensitivity of the AIPW estimator by $\mathit{SS}_{\xi}(\hat{\tau}, D) \leq \frac{\sqrt{\log{(n)}}}{n} \gamma_{\tau}$ \citep{AvellaMedina.2021}, leading to the desired result in Theorem~\ref{thm:ate_privatization}.
\end{proof}

\subsubsection{Proof of Theorem~\ref{thm:variance_privatization}}
\textbf{Theorem 4.5.}     
Let $z:= (a,x,y)$ define a data sample following the joint distribution $\mathcal{Z}$, and let $\hat{\eta} = (\hat{\mu},\hat{\pi})$ be the estimated nuisance functions. Furthermore, let $D$ be the training dataset with $\vert D \vert$ = $n$.  For $U \sim \mathcal{N}(0, 1)$, we define
\begin{align}
    \hat{\sigma}^2_\mathup{DP} := \max \bigg\{0, \hat{\sigma}_{\mathup{AIPW}}^2 + \sup_{z \in \mathcal{Z}} \big \lVert
    \mathrm{IF}^{\sigma}(z, \sigma_0, P) \big \rVert \cdot \frac{5\sqrt{2\ln(n)\ln{(2/\delta)}}}{\varepsilon n} \cdot U \bigg\}.
\end{align}
\normalsize
\noindent
Then, $\hat{\sigma}^2_\mathup{DP}$ is $(\varepsilon, \delta)$-differentially private. 
\begin{proof}
    We proceed in the same manner as in the proof of Theorem~\ref{thm:ate_privatization}. We observe that $\hat{\sigma}_{\mathup{AIPW}}^2$ solves $\sum_{i=1}^n \mathrm{IF}^{\sigma}(z_i,\sigma_0;P) = 0$ with $\frac{\partial}{\partial \hat{\sigma}^2} \mathrm{IF}^{\sigma}(z, \sigma_0;P) = 1$. Again, since we assume the data to stem from a bounded domain, there exists a constant $K \in \mathbb{R}$ such that, for the gross-error sensitivity of $\hat{\sigma}^2$, we have $\gamma_{\sigma}:=  \sup_{z \in \mathcal{Z}} \big \lVert \mathrm{IF}^{\sigma}(z, \sigma_0, P) \big \rVert \leq K$. 
    Then, with Theorem 1 in \citet{AvellaMedina.2021}, we can upper bound the smooth sensitivity $\mathit{SS}_{\xi}(\hat{\sigma}^2, D)$ by $\frac{\sqrt{\log{(n)}}}{n} \gamma_{\sigma}$. As a result, we get that the quantity
    \begin{align}
        \hat{\sigma}_{\mathup{AIPW}}^2 + \sup_{z \in \mathcal{Z}} \big \lVert
    \mathrm{IF}^{\sigma}(z, \sigma_0, P) \big \rVert \cdot \frac{5\sqrt{2\ln(n)\ln{(2/\delta)}}}{\varepsilon n} \cdot U
    \end{align}
    is $(\varepsilon, \delta)$-private. 
    Now, observe that the maximum function does not require further queries to the underlying data. Therefore, by the post-processing theorem of DP, $\hat{\sigma}^2_\mathup{DP}$ is as well $(\varepsilon, \delta)$-private.
\end{proof}

\subsubsection{Proof of Theorem~\ref{thm:intervals}}
\textbf{Theorem 4.6.}
Let $z:= (a,x,y)$ define a data sample following the joint distribution $\mathcal{Z}$, and let $\hat{\eta} = (\hat{\mu},\hat{\pi})$ be the estimated nuisance functions. We define the privacy budget $(\varepsilon, \delta)$ and the sub-budgets $\varepsilon_1, \varepsilon_2, \delta_1, \delta_2$, for ATE and variance estimation under DP such that $\varepsilon_1 + \varepsilon_2 = \varepsilon$ and $\delta_1 + \delta_2 = \delta$. Furthermore, let $D$ be the training dataset with $\vert D \vert$ = $n$, and let $\hat{\tau}_{\mathup{DP}}$ denote the $(\varepsilon_1, \delta_1)$-differentially private ATE estimate and $\hat{\sigma}_{\mathup{DP}}$ denote the $(\varepsilon_2, \delta_2)$-differentially private variance estimate. Then, the valid and $(\varepsilon, \delta)$-differentially private CI is given by
\begin{align}
    \mathrm{CI}_{\mathup{DP}} := \Bigg[ \hat{\tau}^{\mathup{DP}} \pm \Phi^{-1}\Big(1-\frac{\alpha}{2}\Big) \sqrt{\frac{\hat{V}_{\mathup{DP}}}{n}}\Bigg],
\end{align}
where $\Phi^{-1}(1-\alpha/2)$ denotes the respective quantile of the standard normal distribution, $\gamma_{\tau}$ is the gross-error sensitivity of the ATE estimation, and
\begin{align}
    \hat{V}_{\mathup{DP}} =  \hat{\sigma}^2_{\mathup{DP}}+ \gamma_{\tau}^2 \cdot \frac{50\ln(n)}{n\varepsilon_1^{2}}\ln{\Big(\frac{2}{\delta_1}\Big)}.
\end{align}
\begin{proof}
    We need to show that (i) the provided CI is differentially private and (ii) that it is valid in the sense that the CI has asymptotic coverage at the targeted confidence level $1-\alpha$.

    First, we show (i):
    By Theorem~\ref{thm:ate_privatization} and Theorem~\ref{thm:variance_privatization}, $\hat{\tau}_{\mathup{DP}}$ and $\hat{\sigma}_{\mathup{DP}}$ are $(\varepsilon_1, \delta_1)$- and $(\varepsilon_2, \delta_2)$-differentially private, respectively. 
    As $\hat{V}_{\mathup{DP}}$ does not require further access to single data points, it as well is $(\varepsilon_2, \delta_2)$-differentially private by the post-processing property of DP (see Lemma~\ref{lem:post-processing}). As $\mathrm{CI}_{\mathup{DP}}$ states a non-data dependent combination of two privatized estimates, , $\mathrm{CI}_{\mathup{DP}}$ is $(\varepsilon_1 + \varepsilon_2, \delta_1 + \delta_2)$-differentially private by the parallel decomposition and the post-processing properties of DP (see Lemmas~\ref{lem:composition} and \ref{lem:post-processing}).

    Now, we turn to proving the validity of the CI (ii):
    Recall from Section~\ref{sec:intervals} that 
        \begin{align}
        \sqrt{n}\left(\hat{\tau}_{\mathup{DP}} - \tau_0\right) \rightarrow \sqrt{\sigma^2_{\mathup{AIPW}}} \cdot \mathcal{N} \left(0, 1\right).
    \end{align}
    As discussed in Section~\ref{sec:intervals}, we also aim to account for the finite sample variance stemming from privatization, to make the intervals not only asymptotically valid but also useful in finite sample settings. Thus, a valid CI which accounts for the finite sample privatization noise could be constructed around $\hat{\tau}_{\mathup{DP}}$ with variance $\sigma^2 = \sigma^2_{\mathup{AIPW}} + n \, r(\varepsilon, \delta, n)^2$. However, as $\sigma^2_{\mathup{AIPW}}$ is non-private, the resulting CI would also not be private. 
    
    Now observe that, by construction of $\sigma^2_{\mathrm{DP}}$, it also holds that $ \sqrt{n}\left(\sigma^2_{\mathup{DP}} - \sigma^2_{\mathup{AIPW}}\right) \rightarrow 0$, which proves the validity of $\mathrm{CI}_{\mathup{DP}}$. 
\end{proof}


\newpage
\section{Experiments}
\label{sec:appendix_experiments}

\subsection{Datasets}

\subsubsection{Synthetic data}
We generate our datasets similar to \citet{Oprescu.2019}:
\begin{align}
X_i &\sim \mathcal{U}[0, 1]^p \\
A_i &= \mathbf{1}\{(X^T\beta)_i \geq \eta_i\} \\
Y_i &= \tau A_i + X_i^T\gamma + \epsilon_i
\end{align}
where $\epsilon_i, \eta_i \sim \mathcal{U}[-1,1]$ are independent noise terms. The coefficients $\beta$ and $\gamma$ are sparse vectors with random support size $s$, where non-zero entries are drawn uniformly with $\beta_j \sim \mathcal{U}[0,0.3]$ and $\gamma_j \sim \mathcal{U}[0,1]$ for $j$ in the support. To satisfy the overlap assumption, propensity scores are clipped in the range $(0.1, 0.9)$, whereby the true average treatment effect $\tau$ is set to 1.0. Here, we consider two datasets: a low-dimensional dataset with $p=2$ covariates (Dataset~1) and a high-dimensional dataset with $p=24$ and $s=6$ (Dataset~2).

\subsubsection{Medical data}

The RHC dataset contains detailed real-world observations of 5735 patients admitted to intensive care units at different hospitals as part of the Study to Understand Prognoses and Preferences for Outcomes and Risks of Treatments (SUPPORT). We aim to predict the average treatment effect of receiving a Swan-Ganz catheterization, a diagnostic procedure performed on a pulmonary artery, on the probability of death in the 180 days after hospitalization. Our estimation is based on 8 confounders from medical practice (e.g., primary disease, heart rate, Glasgow coma score) and a binary treatment (Swan-Ganz catheterization in the first 24 hours after admission). We use a standard preprocessing pipeline that imputes missing values and removes outliers; our preprocessing pipeline is available in our GitHub repository.

\subsection{Implementation details}

All of our experiments are implemented in Python. We provide our code in our GitHub repository: 
\url{https://github.com/m-schroder/PrivATE}.
The experiments were run on an AMD Ryzen 7 PRO 6850U 2.70 GHz CPU with eight cores and 32GB RAM. The runtime varied for the different experiments (longer runtime for the more complex dataset 2). However, all experiments are easily feasible to compute on all standard computing resources within a short time. 

Our framework is highly flexible and can be used with any base machine learning method. We implement two versions: (1) a regression-based learner using logistic regression for the propensity score and kernel ridge regression (RBF kernel, $\alpha=0.1$) for the outcome model, and (2) a neural network-based learner with a simple multi-layer perceptron using one hidden layer of 32 neurons and $\tanh$ activation. The neural networks are optimized via stochastic gradient descent and regularized via L2-regularization ($\alpha=0.1$).

The \framework framework includes the calculation of a supremum over a bounded data space. We implemented this maximization problem using L-BFGS-B \citep{Byrd.1995}, a limited-memory Broyden–Fletcher–Goldfarb–Shanno algorithm for solving nonlinear optimization problems with bounded variables. We use default hyperparameters and use $n=10$ random starting points for every optimization.

By default, we choose $\varepsilon_1=\varepsilon_2$ as well as $\delta_1=\delta_2$ if not specified otherwise.

\newpage
\section{Trade-off between privacy and utility of the CIs}
\label{sec:appendix_tradeoff}

Introducing privacy to the ATE estimation and the CIs lowers the informativeness of the intervals due to the additional noise and thus wider intervals. However, this is expected and inherent to privatization, and thus \underline{not} specific to our framework. In practice, it might be interesting to visualize the privacy-utility tradeoff based on a user-defined utility function to decide on a level of privatization for a new analysis.

We performed such an analysis for our experiments, by comparing the utility of \framework against the utility of a bootstrapping method on the private ATE and the non-private estimation. Utility is defined as a weighted sum of privacy and interval width. We present the results for various privacy budgets in Fig.~\ref{fig:utility}. For all utility weightings, we observe that our method outperforms the baselines.

\begin{figure}[h]
    \centering
    \includegraphics[width=0.75\linewidth]{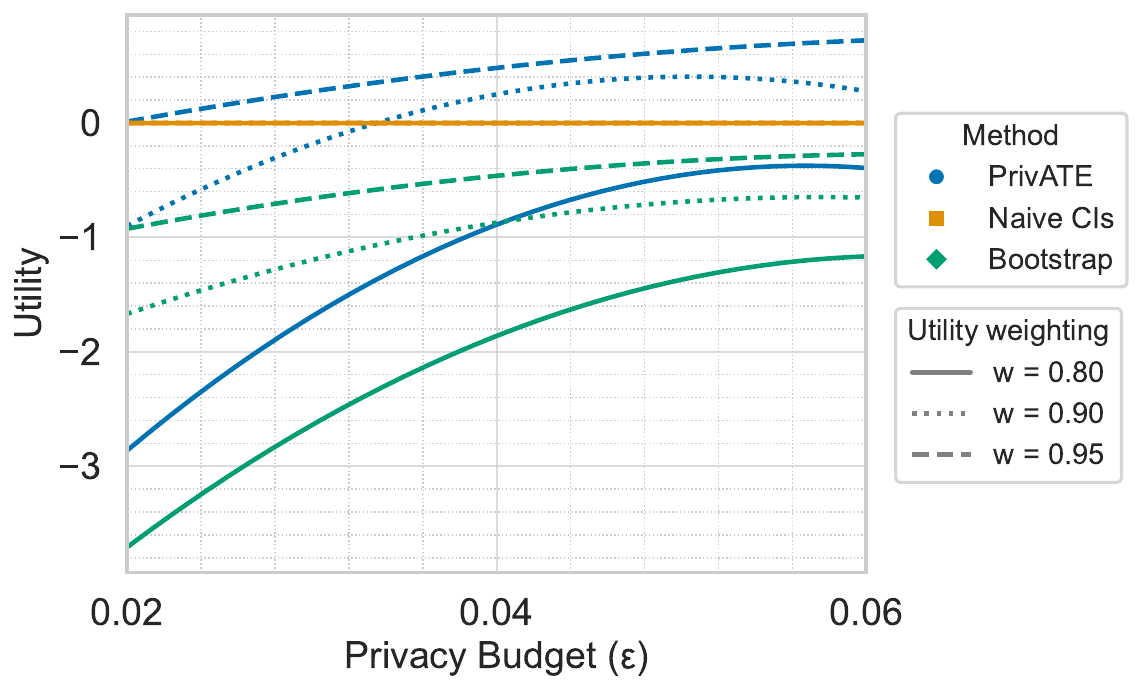}
    \caption{\textbf{Privacy-utility trade-off.} We report the utility curves with respect to the privacy budget for various utility functions (confidence level 0.95). As expected, we observe our \framework to outperform the naive and the bootstrapping method with increasing utility for increasing weight on the privacy constraint. The utility of the naive CIs does not vary significantly for different utility weightings.}
    \label{fig:utility}
\end{figure}

\newpage
\section{Extension to interventional RCT data}
\label{sec:appendix_rct_extension}

We now demonstrate the applicability of \framework to experimental data. In doing so, we can now additionally compare our method against the private differences of means method by \citet{Guha.2024}, yet which is restricted to experimental data (and thus \underline{not} applicable to the observational data used throughout our main paper). Again, we evaluate \framework using simulated and real-world RCT data.

\subsection{Dataset and implementation details}

Each sample consists of 1 uniformly distributed confounder, as in Dataset 1. To simulate an RCT setting, the treatment was then assigned by complete randomization with a factor of 0.5. The outcome generation again resembles the one from Dataset 1. To create the dataset, we sampled 1000 data samples from the described data-generating process. The implementation remained the same as for the other experiments.

\subsection{Results}

In Table~\ref{tab:RCT}, we present the average coverage and the corresponding standard deviation for both methods for confidence levels $\{0.80,0.90,0.95\}$ across ten runs. As expected, both methods roughly achieve the desired coverage.

\begin{table}[h]
\centering
\begin{tabular}{c ccc}
\toprule
Confidence & \multicolumn{3}{c}{Empirical coverage} \\
\cmidrule(lr){2-4}
$1-\alpha$ & \multicolumn{1}{c}{Difference of means} & \multicolumn{1}{c}{\framework (Kernel)} & \multicolumn{1}{c}{\framework (NN)}\\
\midrule
$0.80$ & $0.820 \pm 0.039$ & $0.780 \pm 0.042$ & $0.880  \pm 0.033$\\
\midrule
$0.90$ & $0.920 \pm 0.027$ & $0.900 \pm 0.030$ & $0.890 \pm 0.031$\\
\midrule
$0.95$ & $0.950 \pm 0.022$ & $0.940 \pm 0.024$ & $0.970 \pm 0.017$\\
\bottomrule
\end{tabular}
\vspace{0.2cm}
\caption{\textbf{Synthetic RCT evaluation.}}
\label{tab:RCT}
\end{table}
\vspace{-0.5cm}
When comparing the methods in terms of the widths of the intervals, we observe a significant difference (see Fig.~\ref{fig:RCT_synthetic}): The intervals given by the private differences of means method are much wider than the ones returned by our \framework and thus less informative. In other words: \emph{our method is preferred}.
\begin{figure}[h!]
    \vspace{-0.5cm}
    \centering
    \includegraphics[width=0.7\linewidth]{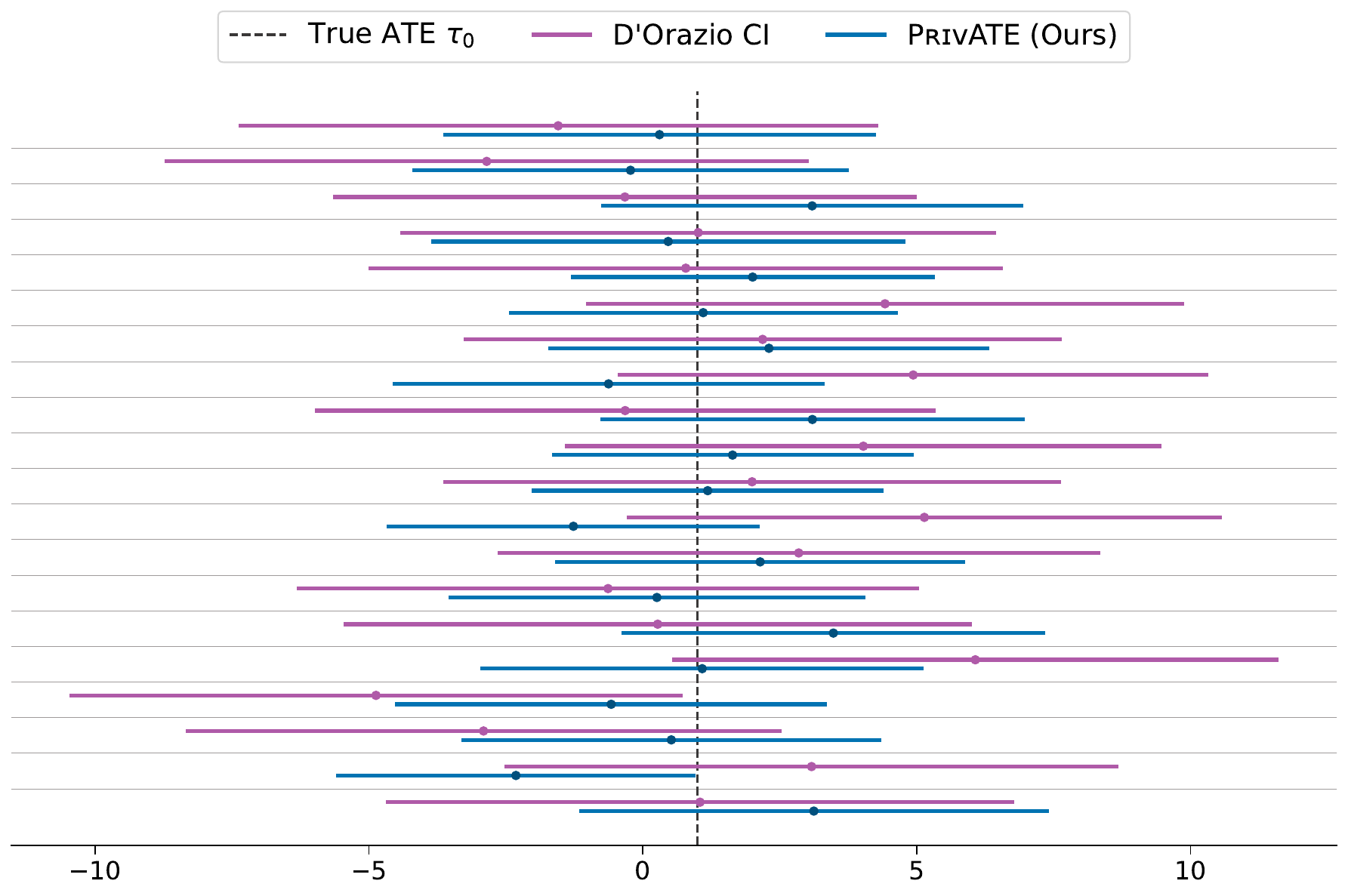}
    \caption{\textbf{Visualization of the CIs on RCT data.} The CIs are reported for both \framework and the difference in means method by \citet{Guha.2024}. For both, we use the NN base learner across different runs with privacy budgets $\varepsilon=0.5$ and $\delta=10^{-5}$. $\Rightarrow$ The intervals of the difference in means method are significantly wider than the ones of \framework. Our method returns more informative intervals and is thus preferred.}
    \label{fig:RCT_synthetic}
\end{figure}


\end{document}